\title{Guarantees for Sound Abstractions for Generalized Planning \\ (Extended Paper)}
\author{%
  Blai Bonet$^{1,2}$\footnote{On sabbatical leave from Universidad Sim\'on Bol\'{\i}var.}\and
  Raquel Fuentetaja$^2$\and
  Yolanda E-Mart\'{\i}n$^2$\and
  Daniel Borrajo$^2$\\
  \affiliations
  $^1$Universidad Sim\'on Bol\'{\i}var, Venezuela\\
  $^2$Universidad Carlos III de Madrid, Spain\\
  \emails
  bonet@usb.ve,
  \{rfuentet,yescuder\}@inf.uc3m.es,
  dborrajo@ia.uc3m.es
}
\newcommand{\Omit}[1]{}
\newcommand{\tup}[1]{\langle #1 \rangle}
\newcommand{\bracket}[1]{\ensuremath{[\![ #1 ]\!]}}
\newcommand{\abst}[2]{\tup{#1 ; #2}}
\newcommand{\inc}[1]{#1\!\!\uparrow}
\newcommand{\dec}[1]{#1\!\!\downarrow}
\newcommand{\complies}[2]{#1 \sim #2}
\newcommand{\repr}[3]{#1 \sim_{#3} #2}
\newcommand{\Q}{\mathcal{Q}}
\newcommand{\D}{\mathcal{D}}
\renewcommand{\L}{\mathcal{L}}
\newcommand{\U}{\mathcal{U}}
\newcommand{\F}{\mathcal{F}}
\newcommand{\G}{\mathcal{G}}
\newcommand{\T}{\mathcal{T}}
\newcommand{\B}{\mathcal{B}}
\newcommand{\Pre}{\text{Pre}}
\newcommand{\Eff}{\text{Eff}}
\newcommand{\Post}{\text{Post}}
\newcommand{\constant}[1]{\ensuremath{\mathsf{#1}}}
\newcommand{\Move}{\text{Move}}
\newcommand{\Newtower}{\text{Newtower}}
\newcommand{\Pick}{\text{Pick}}
\newcommand{\Drop}{\text{Drop}}
\newcommand{\Link}{\text{Link}}
\newcommand{\clear}{{clear}}
\newcommand{\on}{{on}}
\newcommand{\ontable}{{ontable}}
\newcommand{\at}{{at}}
\newcommand{\In}{{in}}
\newcommand{\carry}{{ca}}
\newcommand{\free}{{fr}}
\newcommand{\citeay}[1]{\citeauthor{#1}\xspace [\citeyear{#1}]\xspace}
\providecommand{\noopsort}[1]{}
\newtheorem{definition}{Definition}
\newtheorem{theorem}[definition]{Theorem}
\newtheorem{lemma}[definition]{Lemma}
\newtheorem{corollary}[definition]{Corollary}
\newenvironment{example}{\noindent\textbf{Example.}}{\qed}
\definecolor{darkgreen}{rgb}{0,0.5,0}
\newcommand{\raquel}[1]{\textcolor{red}{Raquel. #1 }}
\begin{document}
\allowdisplaybreaks
\maketitle

\begin{abstract}
  Generalized planning is about finding plans that solve collections
  of planning instances, often infinite collections, rather than
  single instances. Recently it has been shown how to reduce the
  planning problem for generalized planning to the planning problem %of finding a plan
  for a qualitative numerical problem; the latter being
  a reformulation that simultaneously captures %, in an uniform way, all
  all the instances in the collection. An important thread of research
  thus consists in finding such reformulations, or abstractions,
  automatically. A recent proposal learns the abstractions
  inductively from a finite and small sample of transitions from
  instances in the collection. However, as in all inductive processes,
  the learned abstraction is not guaranteed to be correct for the
  whole collection. In this work we address this limitation by
  performing an analysis of the abstraction with respect to the
  collection, and show how to obtain formal guarantees for
  generalization. These guarantees, in the form of first-order
  formulas, may be used to 1)~define subcollections of instances
  on which the abstraction is guaranteed to be sound, 2)~obtain
  necessary conditions for generalization under certain assumptions,
  and 3)~do automated synthesis of complex invariants for planning
  problems. Our framework is general, it can be extended or combined
  with other approaches, and it has applications that go beyond
  generalized planning.
\end{abstract}

\section{Introduction}

Generalized planning is about finding plans that solve a whole collection
of instances of planning problems rather than finding a plan for a single
instance as in classical planning
%\cite{srivastava08learning,bonet:icaps2009,hu:generalized,srivastava:aaai2011,BelleL16}. %,anders:generalized}.
\cite{srivastava08learning,hu:generalized,srivastava:aaai2011,BelleL16,anders:generalized}.
In its simplest form, the instances in the collection share a common pool
of actions and observable features \cite{hu:generalized,bonet:ijcai2017},
yet other formulations consider relational domains where the actions and
features in the instances result of grounding a collection of actions and
atom schemas with different sets of objects
%\cite{boutilier2001symbolic,wang2008first,srivastava:generalized,van2012solving,sanner:practicalMDPs}.
\cite{boutilier2001symbolic,wang2008first,srivastava:generalized,van2012solving}.

A recent proposal for handling relational domains casts the problem of
generalized planning as the problem of solving a single abstraction, or
reformulation, that captures all the instances in the collection \cite{bonet:ijcai2018}.
This abstraction however involves qualitative numerical features, in
addition to the standard boolean features, that are defined in terms of
the objects in the states and their relationships.
The actions in the abstraction tell how the features change their values
when actions are applied. Qualitative rather than exact numerical
features are used to avoid undecidability issues
\cite{helmert:numerical}. The change for such features is only
\emph{qualitative} as they only specify whether the numerical
feature increases, decreases, or remain unchanged.
Under such effects, the problem of solving the abstraction, and hence
the generalized planning problem, can be reduced to the problem of
solving a single fully observable \emph{non-deterministic} (FOND)
problem \cite{geffner:book}.

This formulation of generalized planning is appealing as it leverages
the existing FOND planners to solve, in one shot, a complete (often infinite)
class of problems, but it requires the right set of features and the right abstraction. % to be available.
%, has fostered a thread of research that focuses in finding suitable abstractions
%from (implicitly) given collections $\Q$ %of instances.
\citeay{bonet:aaai2019} learn the abstraction
\emph{inductively} from a small sample of transitions from instances
in the collection. % $\Q$.
The abstraction is guaranteed to generalize %over all instances % in $\Q$
when the sample is sufficiently general and diverse, but, as far as we
know, there have been no attempts to automatically check whether the
learned abstraction is sound for the collection. % $\Q$.

In this work we bridge this gap by providing a general framework
for the synthesis of guarantees for generalization. The guarantees
are in the form of first-order formulas that provide sufficient
conditions for generalization: every instance whose reachable states
satisfy the formulas is guaranteed to be handled correctly by the
abstraction.
We only address the synthesis of such formulas
and defer to future work the problem of verifying whether the formulas
are satisfied on the reachable states of a given instance.
Nonetheless, the automatically synthesized formulas have a rich and complex
structure, and they often express novel and interesting invariants
on well-known benchmarks. For example, in Blocksworld,
the classical problem of moving blocks with a gripper,
one such formula says that every tower must end in a clear block, a formula
that thus forbids the existence of ``circular towers''; we are not aware
of any other approach for invariant synthesis that is able to produce
such a formula.

Our contributions are the following:
1)~a crisp theoretical foundation for the synthesis of formulas only
using as input the relational planning domain and the abstraction,
2)~the obtained formulas define subcollections of instances that are
guaranteed to be handled correctly by the abstraction,
3)~under additional assumptions, necessary conditions for generalization
are obtained, and
4)~the synthesis also provides candidates for invariants
that would then need to be verified. % using a theorem prover.

The paper is organized as follows. The next section provides
background on the feature-based account for generalized planning.
First-order structures and abstractions are discussed in Sect.\ 3.
The framework for generalization and the synthesis algorithm
are given in Sect.\ 4 and 5.
Sect.\ 6, discusses necessary conditions %for generalization
and the synthesis of invariants.
The paper concludes with examples and a discussion. % that includes future work.

\section{Background}

\subsection{Collections of Instances}

We consider collections $\Q$ of \emph{grounded} STRIPS %planning
instances $P=(F,A,I,G)$ where $F$ is a set of atoms
(propositions), $A$ is a set of actions, and $I\subseteq F$
and $G\subseteq F$ describe the initial and goal states of $P$.
It is assumed that all instances in $\Q$ result from grounding a
\emph{common domain} $\D$ with a set of objects, particular to
each instance, and descriptions of the initial and goal situations.
As it is standard, $\D$ specifies the constant and predicate
symbols that define the propositions via the grounding
process, and it also contains lifted action schemas that generate
the set $A$ of grounded actions.
$\Q(\D)$ denotes the class of all grounded instances for domain $\D$.
%Since we assume that all instances in $\Q$ come from a common domain $\D$,
Hence, $\Q\subseteq\Q(\D)$ as all instances in $\Q$ come from $\D$.

\subsection{Abstractions}

The boolean and numerical features are used to build
\emph{uniform abstractions} for the instances in $\Q$.
Such instances, although sharing a common relational domain,
may differ substantially in the number of actions, objects,
and observables.

A boolean feature $\phi$ for $\Q$ is a function that maps each
instance $P\in\Q$ and state $s$ for $P$ (reachable from the initial
state of $P$) into a truth value $\phi(P,s)\in\{\top,\bot\}$.
A numerical feature is a function $\phi$ that maps $P$ and $s$
into a non-negative integer $\phi(P,s)$. When $P$ or $s$ %, or both,
are clear from context we may simplify notation.
The set of features for $\Q$ is denoted by $F$.
For boolean features $f$, an $F$-literal is either $f$ or $\neg f$,
while for numerical features $n$, an $F$-literal is $n>0$ or $n=0$.

An abstraction for $\Q$ is a tuple $\bar Q=(F,A_F,I_F,G_F)$ where
$F$ is a set of features, $A_F$ is a set of abstract actions, and
$I_F$ and $G_F$ describe the abstract initial and goal states in terms of
the features. % in $F$.
An abstract action $\bar a$ is a pair $\abst{\Pre}{\Eff}$ where
$\Pre$ is a collection of $F$-literals, and $\Eff$ is a collection
of effects for $F$.
Effects for boolean features are denoted by $F$-literals, while
effects for numerical features $n$ correspond to increments or
decrements denoted by $\inc{n}$ or $\dec{n}$ respectively.
The items $I_F$ and $G_F$ denote \emph{consistent}
sets of $F$-literals. % that describe the initial and goal states respectively.
It is assumed that the effects of actions and $G_F$ are consistent
sets of literals, and that $I_F$ is maximal consistent.%
\footnote{A set $S$ of $F$-literals is consistent if for any
  boolean feature $f$, $S$ excludes either $f$ or $\neg f$, and
  for any numerical feature $n$, $S$ excludes either $n>0$ or $n=0$.
  $S$ is maximal consistent if it is consistent, and $S\cup\{L\}$
  is not consistent for any $F$-literal $L\notin S$.
}

The pair $(I_F,G_F)$ of initial and goal states in the abstraction
$\bar Q=(F,A_F,I_F,G_F)$ complies with $\Q$ when $(I_F,G_F)$ complies
with each instance $P$ in $\Q$.
%complies with $\Q$ when $\bar Q$ complies with each instance $P$ in $\Q$.
The pair $(I_F,G_F)$ complies with the instance $P$ when the initial state of $P$
is consistent with $I_F$, and if $s$ is a state in $P$ that is 
consistent with $G_F$, then $s$ is a goal state for $P$.
%with $I_F$, and if $s$ is a state in $P$ that complies with $G_F$, then $s$ is a goal state for $P$.
%Finally, a state $s$ in $P$ complies with $I_F$ (resp.\ $G_F$) iff
A state $s$ in $P$ is consistent with $I_F$ (resp.\ $G_F$)
iff $\bar s\cup I_F$ (resp.\ $\bar s\cup G_F$) is consistent, where
$\bar s$ denotes the \emph{boolean valuation} of $F$ on $s$; i.e.,
$\bar s=\{ f : f(s){=}\top \} \cup \{ \neg f : f(s){=}\bot \} \cup \{ n{=}0 : n(s){=}0 \} \cup \{ n{>}0 : n(s){>}0 \}$.
If the pair $(I_F,G_F)$ for the abstraction $\bar Q$ complies with $P$,
we write $\complies{\bar Q}{P}$.

Following \citeay{bonet:ijcai2018}, an abstraction $\bar Q$ is sound
for $\Q$ if it complies with $\Q$ and each action $\bar a$
in $A_F$ is sound (for $\Q$).
%On the other hand, $\bar Q$ is sound for $\Q$ if it complies
%with $\bar Q$, and each action $\bar a$ in $A_F$ is sound.
An abstract action $\bar a=\abst{\Pre}{\Eff}$ is sound iff for
each instance $P$ in $\Q$ and reachable state $s$ in $P$ where
$\Pre$ holds in $\bar s$, $\bar a$ represents at least one action $a$ from
$P$ in $s$.
The abstract action $\bar a$ %=\abst{\Pre}{\Eff}$
represents the action $a$ in the state $s$ iff
1)~the preconditions of $a$ and $\bar a$ both hold in $s$ and $\bar s$ respectively, and
%1)~\daniel{the preconditions of $a$ hold in $s$ and $\Pre$ hold in $\bar s$}, and
2)~the effects of $a$ and $\bar a$ over $F$ are similar; namely,
\begin{enumerate}[$a)$]
  \item for any boolean feature $f$ in $F$, if $f$ changes
    from true to false (resp.\ false to true) in the
    transition $s\leadsto res(a,s)$ (where $res(a,s)$ is the
    state that results of applying $a$ in $s$), then
    $\neg f\in\Eff$ (resp.\ $f\in\Eff$),
  \item for any boolean feature $f$ in $F$, if $f$ (resp.\
    $\neg f$) is in $\Eff$, then $f$ is true (resp.\ false)
    in $res(a,s)$, and
  \item for each numerical feature $n$ in $F$, $\dec{n}$
    (resp.\ $\inc{n}$) appears in $\Eff$ if and only if $n(P,res(a,s)) < n(P,s)$
    (resp.\ $n(P,s) < n(P,res(a,s))$).
\end{enumerate}
We write $\repr{\bar a}{a}{P,s}$ to denote that the abstract
action $\bar a$ represents the action $a$ in the (reachable)
state $s$ of $P$. In such a case, we also say that $a$ instantiates
$\bar a$ in $s$.
When there is no confusion about $P$, we simplify notation
to $\repr{\bar a}{a}{s}$. % or even to $\repr{\bar a}{a}{}$.

Soundness links plans for $\bar Q$ with generalized plans:
if $\bar\pi$ is a plan that solves an abstraction $\bar Q$ that is \emph{sound} for
$\Q$ and $P$ is an instance in $\Q$, then \emph{any} execution $(a_0,a_1,\ldots)$
spawned by $\bar\pi$ on $P$ reaches a goal state for $P$.
The execution $(a_0,a_1,\ldots)$ is spawned by $\bar\pi$ on $P$ iff 
1)~$a_i$ instantiates $\bar\pi(\bar s_i)$ in $s_i$, for $i\geq0$,
%\daniel{deberiamos definir  $\bar\pi(\bar s_i)$? no hemos dicho que $\bar\pi$ sea una politica ¿no?}
2)~$\bar s_i$ is the boolean valuation of $s_i$, for $i\geq0$,
3)~$s_{i+1}=res(a_i,s_i)$, for $i\geq0$, and
4)~$s_0$ is the initial state of $P$.
%Observe that $\bar\pi$ may spawn different executions on $P$.

\medskip
\begin{example}
  Consider the collection $\Q_{clear}$ with
  all Blocks{-}world instances with goal $\clear(\constant{A})$
  where $\constant{A}$ is a fixed block.
  The domain $\D_{clear}$ has no explicit gripper, contains
  a single constant $\constant{A}$, and has two
  action schemas:
  $\Newtower(x,y)$ to move block $x$ from block $y$ to the table, and
  $\Move(x,y,z)$ to move block $x$ from block $y$ onto block $z$.
  An abstraction for $\Q_{clear}$ is $\bar Q_{clear}=(F,A_F,I_F,G_F)$
  where $F=\{n\}$ and $n$ is the feature that
  counts the number of blocks above $\constant{A}$,
  $A_F=\{\bar a\}$ where %$ contains only the abstract action
  $\bar a=\abst{n{>}0}{\dec{n}}$, $I_F=\{n{>}0\}$, and
  $G_F=\{n{=}0\}$.
  It is easy to check that $\bar Q_{clear}$ is sound
  and solved by the plan $\bar\pi_{clear}$ that executes
  $\bar a$ whenever $n{>}0$.
  An action $\Newtower(x,y)$ or $\Move(x,y,z)$ that
  ``removes'' a block from above $\constant{A}$ in state $s$
  is an action that instantiates $\bar a$ in $s$.
  %, and thus $\pi_{clear}$ solves $\Q$ as well. % \cite{bonet:ijcai2018}.
  Notice that $\Q_{clear} \neq \Q(\D_{clear})$ since, for example,
  $\Q(\D_{clear})$ %\setminus\Q_{clear}$
  contains instances that have ``circular towers''.
  %Notice $\Q_{clear}\neq\Q(\D_{clear})$ since the instances in
  %$\Q(\D_{clear})$ that do not correspond to ``real Blocksworld instances''
  %do not belong to $\Q_{clear}$; e.g., instances in which a block
  %rests on two different blocks, or instances that contain
  %``circular towers''.
\end{example}

\subsection{Inductive Learning and Concepts}

%The feature-based account for generalized planning is simple and
%powerful, but it requires the sets $F$ and $A_F$ %of features and abstract actions
%to be available.
\citeay{bonet:aaai2019} show how an abstraction can be learned
from a sample of transitions and a collection $\F$ of candidate
features.
In their approach, each feature in $\F$ is associated with a
concept $C$ that is obtained from a set of atomic concepts,
and a concept grammar \cite{dl-handbook}.%
\footnote{We do not consider the distance features \cite{bonet:aaai2019}
  as it is not clear how to express them in first-order logic.
}

In general, a concept for $\Q$ may be thought of as a function $C$ that
maps instances $P$ in $\Q$ and states $s$ in $P$ into sets $C(P,s)$
of \emph{tuples of objects}.
Concepts define features: boolean features $f_C$ that
denote whether $C(P,s)$ is non-empty, and numerical
features $n_C$ that denote the cardinality of $C(P,s)$.
The concepts by \citeauthor{bonet:aaai2019}\xspace are limited
to denotations that are subsets of objects
rather than object tuples.

\section{First-Order Abstractions}

We deal with formulas in first-order logic %(FOL)
that are built from a signature $\sigma=\sigma(\D)$
given by the relational domain $\D$.
%The signature $\sigma$ contains the constant and relational symbols in $\D$:
The constants defined in $\D$ appear as constant symbols
in $\sigma$, and the predicates defined in $\D$ appear
as relational symbols of corresponding arity in $\D$.
The signature also contains binary relations $p^+$ and $p^*$
for the binary predicates $p$ in $\D$. %, but no other symbol is in $\sigma$.
As usual, $\L(\sigma)$ denotes the class of well-formed %FOL
formulas over $\sigma$.

First-order formulas are interpreted over first-order structures, also called interpretations.
%We do not have the space to cover the semantics of FOL.
We are only interested in structures %(also called interpretations)
that are associated with states. % in planning instances.
A state $s$ provides the universe $\U_s$ of objects
and the interpretations for the constant and relational symbols in $\D$.
The interpretations for $p^+$ and $p^*$, for the binary predicates $p$, % in $\D$,
are provided by the transitive and reflexive-transitive closure of the
interpretation of $p$ provided by $s$.
We write $\varphi(\bar x)$ to denote a formula whose free
variables are among those in $\bar x$.
If $\varphi(\bar x)$ is a formula, $s$ is a state in $P$, %an instance $P$ in $\Q(\D)$,
and $\bar u$ is a tuple of objects in $\U_s$ of
dimension $|\bar x|$, $s\vDash\varphi(\bar u)$ denotes that the
interpretation provided by $s$ satisfies $\varphi$ when the variables
in $\bar x$ are interpreted by the corresponding objects in $\bar u$.

For a concept $C$ characterized by $\Psi_C(\bar x)$,
the extension of $C$ for $s$ in $P$ is %denoted by
$C^s=C(P,s)=\{\bar u:s\vDash\Psi_C(\bar u)\}$.
We assume that all features correspond to concepts whose
characteristic functions are first-order definable:

\begin{definition}[First-Order Abstraction]
  Let $\D$ be a planning domain and let $\sigma=\sigma(\D)$ be
  the signature for $\D$.
  A concept $C$ is (first-order) $\D$-definable if $\Psi_C(\bar x)$
  belongs to $\L(\sigma)$.
  A feature $f$ is $\D$-definable if $f$ is given by a concept
  $C$ that is $\D$-definable.
  An abstraction $\bar Q=(F,A_F,I_F,G_F)$ is a first-order
  abstraction for $\D$ if each feature $f$ in $F$ is $\D$-definable.
\end{definition}

When $\D$ is clear from the context, we just say that $\bar Q$ is
a first-order abstraction without mentioning $\D$.
The applicability of an abstract action $\bar a$ in a
first-order abstraction $\bar Q$ on a state $s$ can be decided
with a first-order formula $\Pre(\bar a)$. %, on the interpretation provided by $s$.

\medskip
\begin{example}
  The abstraction $\bar Q_{clear}$ is a first-order abstraction because
  $F=\{n\}$ and $n$ is the cardinality of the
  concept $C$ given by $\Psi_C(x)=\exists y(\on(x,y) \land \on^*(y,\constant{A}))$.
  However, $C$ is also given by $\Psi'_C(x)=\on^+(x,\constant{A})$.
  As usual, both representations may yield different results
  although being \emph{logically equivalent}; more about this below.
\end{example}

\section{Conditions for Generalization}

Let $\bar Q=(F,A_F,I_F,Q_F)$ be a first-order abstraction for
$\D$.
We look for conditions to establish the soundness of
$\bar Q$ for a generalized problem $\Q\subseteq\Q(\D)$.
In particular, we aim for conditions of the form
$\G=\{ \Phi_{\bar a} : \bar a \in A_F \}$
where $\Phi_{\bar a}=\exists\bar z\bigl(\bigvee_i \Psi^{a_i}_{\bar a}(\bar z)\bigr)$
is associated with the abstract action $\bar a$ and satisfies the following: %such that
\begin{enumerate}[--]
  \item $a_i$ is an \emph{action schema} in $\D$,
  \item $\bar z$ is a tuple of variables that represent the parameters
    of the action schemas in $\D$ (these are existentially quantified
    on the objects of the given state $s$ in problem $P$), and
  \item if $\bar o$ is a tuple of objects of dimension $|\bar z|$ such
    that $s\vDash\Pre(\bar a)\land\Psi^{a_i}_{\bar a}(\bar o)$, where
    $s$ is a reachable state in problem $P\in\Q(\D)$, then the
    %\daniel{un revisor pregunta si no deberia ser $P\in\Q$} then the
    \emph{ground action} $a_i(\bar o)$ instantiates the abstract action
    $\bar a$ in the state $s$ (i.e., $\repr{\bar a}{a_i(\bar o)}{s}$).
\end{enumerate}
The idea is that $\Psi^{a_i}_{\bar a}(\bar o)$ suffices to establish
$\repr{\bar a}{a_i(\bar o)}{s}$ directly from $\Pre(\bar a)$ and the
\emph{(lifted) domain} $\D$ without using any other information about
the reachability of state $s$ (e.g., invariant information for reachable
states).
On the other hand, such formulas would be ``accompanied'' by
\emph{assumed conditions}
$\Pre(\bar a)\Rightarrow \exists\bar z\bigl(\bigvee_i \Psi^{a_i}_{\bar a}(\bar z)\bigr)$
on the \emph{reachable states} that together with the above
properties provide the guarantee:

\begin{definition}[Guarantee]
  \label{def:guarantee}
  Let $\D$ be a planning domain and let $\bar Q=(F,A_F,I_F,G_F)$
  be a first-order abstraction. % for $\D$.
  A \emph{guarantee} for $\bar Q$ is a set %collection
  $\G=\{\Phi_{\bar a}=\exists\bar z\bigl(\bigvee_i\Psi^{a_i}_{\bar a}(\bar z)\bigr):\bar a \in A_F\}$
  of formulas for each abstract action $\bar a$ in $\bar Q$.
  The guarantee is valid in instance $P\in\Q(\D)$ iff for each state
  $s\in P$ (reachable or not) and tuple $\bar o$ of objects in $P$,
  if $s\vDash\Pre(\bar a) \land \Psi^{a_i}_{\bar a}(\bar o)$ then
  $\repr{\bar a}{a_i(\bar o)}{s}$.
  The guarantee $\G$ is valid for $\D$ iff it is valid for each problem $P$
  in $\Q(\D)$.
\end{definition}

\begin{theorem}[Soundness]
  \label{thm:guarantee:soundness}
  Let $\D$ be a planning domain, let $\bar Q=(F,A_F,I_F,I_F,G_F)$
  be a first-order abstraction, and let % for $\D$, and let
  $\G=\{\Phi_{\bar a}=\exists\bar z\bigl(\bigvee_i\Psi^{a_i}_{\bar a}(\bar z)\bigr):\bar a \in A_F\}$
  be a guarantee for $\D$.
  If $\G$ is valid, then $\bar Q$ is a sound abstraction for the
  generalized problem
  $\Q=\{P \in \Q(\D) : \text{$\complies{\bar Q}{P}$ and $\Pre(\bar a)\Rightarrow\Phi_{\bar a}$}$
  holds in the reachable states in $P\}$.
\end{theorem}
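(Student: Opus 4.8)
The plan is to unfold the definition of soundness for $\bar Q$ on the collection $\Q$ into its two constituents and dispatch each in turn. Recall that $\bar Q$ is sound for $\Q$ precisely when (i)~the pair $(I_F,G_F)$ complies with every instance in $\Q$, and (ii)~every abstract action $\bar a\in A_F$ is sound for $\Q$, meaning that on every reachable state $s$ of every $P\in\Q$ at which $\Pre(\bar a)$ holds, $\bar a$ represents at least one ground action of $P$ in $s$. Part~(i) is immediate: the membership condition defining $\Q$ already requires $\complies{\bar Q}{P}$ for each $P\in\Q$, so compliance holds by construction. The entire content therefore lies in~(ii).

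For part~(ii), fix an abstract action $\bar a\in A_F$, an instance $P\in\Q$, and a reachable state $s$ of $P$ at which the abstract precondition $\Pre(\bar a)$ holds in $\bar s$. First I would pass from the $F$-literal reading of the precondition to its first-order reading: since applicability of $\bar a$ on $s$ is decided by the formula $\Pre(\bar a)$, we have $s\vDash\Pre(\bar a)$. Because $P$ belongs to $\Q$, the assumed condition $\Pre(\bar a)\Rightarrow\Phi_{\bar a}$ holds on the reachable states of $P$; applying it to the reachable state $s$ yields $s\vDash\Phi_{\bar a}$, i.e.\ $s\vDash\exists\bar z\bigl(\bigvee_i\Psi^{a_i}_{\bar a}(\bar z)\bigr)$. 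Hence there is a tuple $\bar o$ of objects of $P$ and an index $i$ with $s\vDash\Psi^{a_i}_{\bar a}(\bar o)$, and combining with $s\vDash\Pre(\bar a)$ gives $s\vDash\Pre(\bar a)\land\Psi^{a_i}_{\bar a}(\bar o)$. Now I invoke the hypothesis that $\G$ is valid for $\D$: since $P\in\Q(\D)$, $\G$ is valid in $P$, so by Definition~\ref{def:guarantee} the conjunction just established implies $\repr{\bar a}{a_i(\bar o)}{s}$. The ground action $a_i(\bar o)$ is then an action of $P$ (its preconditions hold in $s$, as representation requires this) that $\bar a$ represents in $s$, which is exactly what soundness of $\bar a$ demands.

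The argument is essentially a chase through the definitions, so the main obstacle is conceptual bookkeeping rather than any hard estimate. Two junctures deserve care. The first is the identification of ``$\Pre(\bar a)$ holds in $\bar s$'' with ``$s\vDash\Pre(\bar a)$''; this is the bridge between the feature-level abstraction and its first-order encoding, and it must be justified from how $\Pre(\bar a)$ is defined as the first-order applicability test. The second, and the real crux, is to keep straight the two different scopes of the hypotheses: validity of $\G$ is an unconditional, \emph{all-states} property (it holds whether or not $s$ is reachable), whereas the assumed condition $\Pre(\bar a)\Rightarrow\Phi_{\bar a}$ is only imposed on \emph{reachable} states and is precisely what carves out the subcollection $\Q$. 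Soundness of $\bar a$ is required only on reachable states, so the reachable-state witness supplied by the assumed condition is fed into the all-states validity to produce the representing ground action. Making this division of labour explicit is what turns the definition-chase into a correct proof.
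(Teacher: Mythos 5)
Your proof is correct and follows essentially the same route as the paper's: decompose soundness into compliance (immediate from the definition of $\Q$) plus action soundness, then chain the assumed condition $\Pre(\bar a)\Rightarrow\Phi_{\bar a}$ on the reachable state $s$ to obtain a witness $\bar o$ with $s\vDash\Pre(\bar a)\land\Psi^{a_i}_{\bar a}(\bar o)$, and invoke validity of $\G$ (Definition~\ref{def:guarantee}) to conclude $\repr{\bar a}{a_i(\bar o)}{s}$. Your explicit remarks on the two scopes (validity over all states versus the assumed condition over reachable states only) and on the bridge between $\bar s$ and $s\vDash\Pre(\bar a)$ are points the paper's terser proof leaves implicit, but the argument is the same.
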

\begin{proof}
  Let $P$ be a problem in $\Q$, let $s$ be a reachable state in $P$,
  and let $\bar a$ be an abstract action that is applicable in $\bar s$.
  Since $\complies{\bar Q}{P}$, we only need to show $\repr{\bar a}{a}{s}$
  for some action $a$.
  By definition of $\Q$, $s\vDash\Phi_{\bar a}$ where
  $\Phi_{\bar a}=\exists\bar z\bigl(\bigvee_i\Psi^{a_i}_{\bar a}(\bar z)\bigr)$.
  Hence, there is a tuple $\bar o$ of objects such that
  $s\vDash\Pre(\bar a)\land\Psi^{a_i}_{\bar a}(\bar o)$ for some schema $a_i$ in $\D$.
  Then, by Definition~\ref{def:guarantee}, $\repr{\bar a}{a_i(\bar o)}{s}$.
\end{proof}

\section{Synthesis}

For a feature $f$ defined by concept $C$ we need to track
its value along transitions $s\leadsto res(a(\bar o),s)$.
Let $\Psi^a_C(\bar z,\bar x)$ be a formula that defines
at state $s$ the extension of $C$ in the state
$res(a(\bar o),s)$ that results of applying $a(\bar o)$ in $s$; i.e.,
\begin{alignat*}{1}
  s\vDash\Psi^a_C(\bar o,\bar u) \quad\text{iff}\quad \bar u\in C^{res(a(\bar o),s)}
\end{alignat*}
where $\bar u$ is a tuple of objects.
For example, a boolean feature $f$ defined by $C$ goes from true to false
%in $s\leadsto s'=res(a(\bar o),s)$ iff $C^s\neq\emptyset$
in $s\leadsto res(a(\bar o),s)$ iff $C^s\neq\emptyset$
%and $C^{s'}=\emptyset$ iff $s\vDash\exists\bar x(\Psi_C(\bar x))$
and $C^{res(a(\bar o),s)}=\emptyset$ iff $s\vDash\exists\bar x(\Psi_C(\bar x))$
and $s'\vDash\neg\exists\bar x(\Psi_C(\bar x))$ iff
$s\vDash \exists\bar x(\Psi_C(\bar x)) \land \forall\bar x(\neg\Psi_C^a(\bar o,\bar x))$.

Since the concept $C$ may be defined in terms of relations
$p^*$ or $p^+$ that denote the transitive closure of $p$,
and that transitive closure is not first-order definable
\cite{vardi:complexity-ql}, it is not always possible to
track in first-order logic the change of denotation for
$p^*$ or $p^+$ after an action changes the denotation of $p$.
Hence, we settle for a ``logical approximation'' of
$\Psi^a_C(\bar z,\bar x)$ in terms of necessary and sufficient
conditions:
\begin{alignat*}{1}
  S^a_C(\bar z,\bar x)\ \Rightarrow\
    \Psi^a_C(\bar z,\bar x)\ \Rightarrow\ N^a_C(\bar z,\bar x) \,.
\end{alignat*}
A \emph{base for synthesis} provides approximations
for all the atoms in the language $\L(\sigma)$:

\begin{definition}[Base for Synthesis]
  \label{def:base}
  A base for synthesis for domain $\D$ is a set $\T$ that contains
  formulas $\T_X(a,p)(\bar z,\bar x)$ for $X{\in}\{N,S\}$, action
  schemas $a(\bar z)\in\D$, and predicates $p\in\D$ of arity $|\bar x|$.
  It also contains formulas $\T_X(a,p^c)(\bar z,x,y)$ for $X{\in}\{N,S\}$,
  action schemas $a(\bar z)\in\D$, binary predicates $p\in\D$, and at
  least one of $c=*$ or $c=+$.
  These formulas should provide necessary and sufficient conditions
  as follows.
  For any problem $P$ in $\Q(\D)$, state $s$ in $P$, tuple $\bar o$
  such that $a(\bar o)$ is applicable at $s$, tuple $\bar u$,
  objects $u$ and $v$, and $c\in\{*,+\}$:
  \begin{alignat*}{1}
    \notag s\vDash\T_S(a,p)(\bar o,\bar u)\
      &\Rightarrow\ \bar u \in C^{res(a(\bar o),s)}_p \\
      &\Rightarrow\ s\vDash\T_N(a,p)(\bar o,\bar u)\,, \\
    \notag s\vDash\T_S(a,p^c)(\bar o,u,v)\
      &\Rightarrow\ (u,v)\in C^{res(a(\bar o),s)}_{p^c} \\
      &\Rightarrow\ s\vDash\T_N(a,p^c)(\bar o,u,v)
  \end{alignat*}
  where $C^s_p\!=\!\{\bar u:s\vDash p(\bar u)\}$ and
  $C^s_{p^c}\!=\!\{(u,v):s\vDash p^c(u,v)\}$ are the
  concepts associated with $p$ and $p^c$ respectively.
\end{definition}

The approximation for the atoms in $\L(\sigma)$ that is provided
by the base $\T$ is \emph{lifted} over all first-order formulas.
Indeed, the following structural induction gives necessary and
sufficient conditions $N^a_\varphi(\bar z,\bar x)$ and
$S^a_\varphi(\bar z,\bar x)$ for any concept $C$ defined
in terms of formula $\varphi$. For $X{\in}\{N,S\}$:
\begin{enumerate}[--]
  \item $X^a_{p}(\bar z,\bar x){=}\T_X(a,p)(\bar z,\bar x)$ %and $X^a_{p^c}(\bar z,\!\bar x){=}\T_X(a,p^c)(\bar z,\!\bar x)$ for
    where $p$ is a predicate of arity $|\bar x|$, or $p=q^c$ for
    some binary predicate $q$, and $c\in\{*,+\}$,
  \item $X^a_{\varphi\circ\psi}(\bar z,\bar x)=X^a_\varphi(\bar z,\bar x) \circ X^a_\psi(\bar z,\bar x)$
    where $\circ\in\{\land,\lor\}$,
  \item $N^a_{\neg\varphi}(\bar z,\bar x)=\neg S^a_\varphi(\bar z,\bar x)$ and $S^a_{\neg\varphi}(\bar z,\bar x)=\neg N^a_\varphi(\bar z,\bar x)$, and
  \item $X^a_{Qy\varphi}(\bar z,\bar x)=Qy X^a_\varphi(\bar z,\bar x,y)$
    where $Q\in\{\exists,\forall\}$.
\end{enumerate}
The base provides approximations for either $p^*$ or $p^+$,
or both. In the former case, this is enough since one of
the closures can be expressed in terms of the other; e.g.,
$\on^+(x,y) \equiv \exists z(\on(x,z)\land \on^*(z,y))$).

Below we propose a general base for synthesis of formulas.
With this base, the formulas $N^a_\varphi$
and $S^a_\varphi$ are identical except when $\varphi$
contains a transitive closure.
Hence, except for such $\varphi$, both formulas
are necessary and sufficient.

\begin{theorem}[Lift]
  \label{thm:lift}
  Let $\T$ be a base for synthesis for domain $\D$, let $a=a(\bar z)$ be an
  schema in $\D$, and let $\varphi(\bar z, \bar x)$ be a first-order formula
  in $\L(\sigma(\D))$.
  Then, for any instance $P$ in $\Q(\D)$, state $s$ for $P$,
  and tuples $\bar o$ and $\bar u$ of objects in $P$:
  \begin{alignat*}{1}
    s \vDash S^a_\varphi(\bar o, \bar u)\ \Rightarrow\
    res(a(\bar o),s) \vDash \varphi(\bar o, \bar u)\ \Rightarrow\
    s \vDash N^a_\varphi(\bar o, \bar u) \,.
  \end{alignat*}
\end{theorem}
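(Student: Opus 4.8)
The plan is to prove both implications in the displayed chain \emph{simultaneously} by structural induction on the formula $\varphi$, exploiting the fact that the lifting rules for $N^a_{\cdot}$ and $S^a_{\cdot}$ mirror the logical connectives exactly (with a deliberate swap under negation). The inductive statement, asserted uniformly over all $P\in\Q(\D)$, states $s$ for $P$, and tuples $\bar o,\bar u$, is the conjunction of the two implications
$s\vDash S^a_\varphi(\bar o,\bar u)\Rightarrow res(a(\bar o),s)\vDash\varphi(\bar o,\bar u)$ and
$res(a(\bar o),s)\vDash\varphi(\bar o,\bar u)\Rightarrow s\vDash N^a_\varphi(\bar o,\bar u)$.

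The base case is when $\varphi$ is an atom $p(\bar x)$ or $p^c(x,y)$ with $c\in\{*,+\}$. Here the lifting rule gives $S^a_\varphi=\T_S(a,p)$ and $N^a_\varphi=\T_N(a,p)$ (and likewise for $p^c$), while $res(a(\bar o),s)\vDash p(\bar u)$ is by definition $\bar u\in C^{res(a(\bar o),s)}_p$. Thus the two required implications are precisely the chain guaranteed by Definition~\ref{def:base}, so nothing more is needed. This is the only place where the applicability of $a(\bar o)$ at $s$, built into that definition, is invoked.

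For the inductive connectives, $\varphi=\psi\circ\chi$ with $\circ\in\{\land,\lor\}$ uses $X^a_\varphi=X^a_\psi\circ X^a_\chi$, so both implications follow by applying the induction hypotheses to $\psi$ and $\chi$ and using that $\land$ and $\lor$ are monotone in each argument. The quantifier case $\varphi=Qy\,\psi$ uses $X^a_{Qy\psi}=Qy\,X^a_\psi$: for the sufficient direction with $Q{=}\exists$, a witness $w$ with $s\vDash S^a_\psi(\bar o,\bar u,w)$ is, by the induction hypothesis, a witness for $\psi$ at $res(a(\bar o),s)$; for the necessary direction, a witness for $\psi$ at $res(a(\bar o),s)$ is, by the hypothesis, a witness for $N^a_\psi$ at $s$. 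The universal case is dual, quantifying over all objects.

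The delicate step, and the one I expect to be the main obstacle to state cleanly, is negation, where $N^a_{\neg\psi}=\neg S^a_\psi$ and $S^a_{\neg\psi}=\neg N^a_\psi$ swap the two approximants. Each implication for $\neg\psi$ is obtained by contraposing the \emph{opposite} implication of the hypothesis for $\psi$: from $s\vDash\neg N^a_\psi$ one derives $res(a(\bar o),s)\not\vDash\psi$ via the contrapositive of ``$res(a(\bar o),s)\vDash\psi\Rightarrow s\vDash N^a_\psi$'', and dually from $res(a(\bar o),s)\not\vDash\psi$ one derives $s\vDash\neg S^a_\psi$ via the contrapositive of ``$s\vDash S^a_\psi\Rightarrow res(a(\bar o),s)\vDash\psi$''. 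A second subtlety, needed in the quantifier step, is that passing a witness between $s$ and $res(a(\bar o),s)$ presupposes a common universe of objects; this is legitimate because STRIPS actions neither create nor destroy objects, so $\U_s=\U_{res(a(\bar o),s)}$ and the tuples $\bar o,\bar u$ denote the same elements in both structures.
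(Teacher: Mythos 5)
Your proof is correct and is precisely the argument the paper intends: the paper states Theorem~\ref{thm:lift} immediately after giving the lifting rules and leaves the induction implicit, and that implicit argument is exactly your simultaneous structural induction on $\varphi$ --- base case discharged by Definition~\ref{def:base}, conjunction/disjunction and quantifiers by monotonicity and witness transfer, and negation by contraposing the \emph{opposite} implication of the inductive hypothesis, which is why both implications must be carried together. The two subtleties you flag (applicability of $a(\bar o)$ at $s$, needed both for $res(a(\bar o),s)$ to be defined and for the base-case guarantee of Definition~\ref{def:base} to apply, and the fact that STRIPS transitions preserve the universe so witnesses can be moved between $s$ and $res(a(\bar o),s)$) are genuine and correctly resolved.
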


As noted earlier, tracking the change of boolean features $f$
defined by concepts $C$ is easy since $f$ is true or false
at $s$ iff $C^s$ is non-empty or empty respectively.
Tracking the qualitative numerical changes is more challenging, however.
For example, $f$ increases in the transition $s\leadsto s'$
iff $|C^s|<|C^{s'}|$. This condition is difficult to
capture because the extension of $C$ may increase size by
the result of a small change, as simple as one new element
entering the set, or by a large change involving many elements.
The case of local, small, changes is common and easy to define:

\begin{definition}[Monotonicity]
  \label{def:monotoinicity}
  Let $\D$ be a domain
  %let $C$ be a $\D$-definable concept,
  and let $P$ be an instance in $\Q(\D)$.
  A concept $C$ for $\Q(\D)$ is \emph{monotone} in $P$ if for every \emph{reachable state}
  $s$ in $P$, and action $a(\bar o)$ that is applicable in $s$, either
  $C(P,s)\subseteq C(P,s')$, $C(P,s)\supseteq C(P,s')$, or
  $C(P,s)=C(P,s')$ for $s'{=}res(a(\bar o),s)$.
  A first-order abstraction $\bar Q$ is monotone for $P$ if
  each feature $f$ in $\bar Q$ is defined by a concept $C$ that
  is $\D$-definable and monotone. % for $P$.
\end{definition}

Necessary and sufficient conditions for the change of value of
monotone features $f$ along transitions $s\leadsto res(a(\bar z),s)$,
for action schema $a(\bar z)$, are provided by the formulas:
\begin{alignat*}{1}
  X^{inc}_C(\bar z)\ &=\ \exists\bar x(\neg\Psi \land X^a_\Psi) \,, \\
  X^{dec}_C(\bar z)\ &=\ \exists\bar x(\Psi \land \neg\widehat X^a_\Psi) \,, \\
  X^{eq}_C(\bar z)\ &=\ \forall\bar x(\Psi \Rightarrow X^a_\Psi) \land \forall\bar x(\widehat X^a_\Psi \Rightarrow \Psi) \,, \\
  X^{true}_C(\bar z)\ &=\ \exists\bar x(X^a_{\Psi}(\bar z,\bar x)) \,, \\
  X^{false}_C(\bar z)\ &=\ \forall\bar x(\neg\widehat X^a_{\Psi}(\bar z,\bar x))
  %X^{inc}_C(\bar z)\ &=\ \forall\bar x(\Psi \Rightarrow X^a_\Psi) \land \exists\bar x(\neg\Psi \land X^a_\Psi) \,, \\
  %X^{dec}_C(\bar z)\ &=\ \forall\bar x(\widehat X^a_\Psi \Rightarrow \Psi) \land \exists\bar x(\Psi \land \neg\widehat X^a_\Psi) \,, \\
  %X^{eq}_C(\bar z)\ &=\ \forall\bar x(\Psi \Rightarrow X^a_\Psi) \land \forall\bar x(\widehat X^a_\Psi \Rightarrow \Psi) \,, \\
  %X^{true}_C(\bar z)\ &=\ \exists\bar x(X^a_{\Psi}(\bar z,\bar x)) \,, \\
  %X^{false}_C(\bar z)\ &=\ \forall\bar x(\neg\widehat X^a_{\Psi}(\bar z,\bar x))
\end{alignat*}
where $C$ is the concept that defines $f$, $X{\in}\{N,S\}$
denotes a necessary or sufficient condition, $\Psi$ and $X^a_\Psi$
denote $\Psi_C(\bar x)$ and $X^a_{\Psi_C}(\bar z,\bar x)$ respectively,
and $\widehat N^a_\Psi=S^a_\Psi$ and $\widehat S^a_\Psi = N^a_\Psi$.

For example, $S^{dec}_C(\bar z)=\exists\bar x(\Psi\land\neg N^a_\Psi)$.
If $s\vDash S^{dec}_C(\bar o)$, there is object tuple $\bar u$ such that
$s\vDash \Psi(\bar u) \land \neg N^a_\Psi(\bar o, \bar u)$; i.e.,
$\bar u\in C^s$ and $\bar u\notin C^{res(a(\bar o),s)}$.
For monotone features, the only possibility is
$C^{res(a(\bar o),s)} \subsetneq C^s$ which means that the feature
\emph{decreases} in the transition $s\leadsto res(a(\bar o),s)$.

For obtaining sufficient conditions for general features, the
first two formulas from above are strengthen as
\begin{alignat*}{1}
  S^{inc}_C(\bar z)\ &=\ \forall\bar x(\Psi \Rightarrow S^a_\Psi) \land \exists\bar x(\neg\Psi \land S^a_\Psi) \,, \\
  S^{dec}_C(\bar z)\ &=\ \forall\bar x(N^a_\Psi \Rightarrow \Psi) \land \exists\bar x(\Psi \land \neg N^a_\Psi) 
\end{alignat*}
where the added conjunct enforces that the feature defined by
the concept $C$ is indeed monotone. For the remaining cases,
the formulas for sufficiency correspond to those above.

\Omit{
For example, $S^{dec}_C(\bar z)$ is $\forall\bar x(N^a_\Psi\Rightarrow\Psi)\land\exists\bar x(\Psi\land\neg N^a_\Psi)$.
The first conjunct says that if $\bar x$ is in $C^{res(a(\bar z),s)}$, then $\bar x$
is in $C^s$, while the second says that there is some $\bar x$ that is in $C^s$
but not in $C^{res(a(\bar z),s)}$. If $s\vDash S^{dec}_C(\bar o)$, then
$C^{res(a(\bar o),s)} \subsetneq C^s$, and $f$ then decreases in $s\leadsto res(a(\bar o),s)$.
Interestingly, the sufficient conditions provided by the above formulas
are valid for any type of feature, not only monotone features:
\Omit{
For $Change=\{inc,dec,eq,true,false\}$ and $\varphi\in Change$, we define formulas $S^{\varphi}_C(\bar z)$
and $N^{\varphi}_C(\bar z)$
%, $S^{dec}_C(\bar z)$, $S^{eq}_C(\bar z)$, $S^{true}_C(\bar z)$, and $S^{false}_C(\bar z)$
to express \emph{sufficient and necessary conditions} for the change in cardinality
of concept $C$'s extension after applying $a(\bar z)$. For $X\in\{N,S\}$:
%We first express change in cardinality by subset containment
%(below, $\Psi$ and $\Psi^a$ denote $\Psi_C(\bar x)$ and $\Psi^a_C(\bar z,\bar x)$ respectively):
%In the following, $\Psi$ denotes $\Psi_C(\bar x)$, and $S$ and $N$
%denote $S^a_{\Psi}(\bar z,\bar x)$ and $N^a_{\Psi}(\bar z,\bar x)$
%respectively. The formulas are:
\begin{alignat*}{1}
  X^{inc}_C(\bar z)\ &=\ \forall\bar x(\Psi \Rightarrow X^a_\Psi) \land \exists\bar x(\neg\Psi \land X^a_\Psi) \,, \\
  X^{dec}_C(\bar z)\ &=\ \forall\bar x(\widehat X^a_\Psi \Rightarrow \Psi) \land \exists\bar x(\Psi \land \neg\widehat X^a_\Psi) \,, \\
  X^{eq}_C(\bar z)\ &=\ \forall\bar x(\Psi \Rightarrow X^a_\Psi) \land \forall\bar x(\widehat X^a_\Psi \Rightarrow \Psi) \,, \\
  X^{true}_C(\bar z)\ &=\ \exists\bar x(X^a_{\Psi_C}(\bar z,\bar x)) \,, \\
  X^{false}_C(\bar z)\ &=\ \forall\bar x(\neg\widehat X^a_{\Psi^a_C}(\bar z,\bar x))
\end{alignat*}
where $\widehat N^a=S^a$ and $\widehat S^a = N^a$.
For example, $S^{dec}_C(\bar z)=\forall x(N^a_\Psi\Rightarrow \Psi)\land\exists\bar x(\Psi\land\neg N^a_\Psi)$.
%Then, $S^{\varphi}_C(\bar z)$ is defined as $S_{\varphi(C,\bar z)}$
%for $\varphi\in Change$ where $Change=\{inc,dec,eq,true,false\}$.
By Theorem~\ref{thm:lift},
%To understand these formulas, observe that $C^s\subsetneq C^{s'}$ is a
%\emph{sufficient condition} for an increase in the cardinality of
%$C$'s extension in the transition $s\leadsto s'=res(a(\bar o),s)$.
%If $C$ is $\D$-definable, this is expressed as $s\vDash\Delta(\bar o)$
%where $\Delta(\bar z)$ is %the formula
%\begin{alignat}{1}
%  \label{eq:delta}
%  \forall\bar x(\Psi_C(\bar x) \Rightarrow \Psi^a_C(\bar z,\bar x)) \land \exists \bar x(\neg\Psi_C(\bar x)\land\Psi^a_C(\bar z,\bar x)) \,.
%\end{alignat}
%The formula $\Delta^{inc}_C(\bar z)$ is $S^a_{\Delta(\bar z)}$.
%Similarly and by Theorem~\ref{thm:lift}:
}
}

\begin{lemma}
  \label{lemma:sufficient}
  Let $C$ be a concept characterized by formula $\Psi_C(\bar x)$,
  and let $s$ be a state in $P\in\Q(\D)$ on which the action $a(\bar o)$
  is applicable. Then, %, where $a(\bar z)$ is a schema in $\D$. Then,
  \begin{alignat*}{2}
    s\vDash S^{inc}_C(\bar o)\  &\Rightarrow\ C^s \subsetneq C^{s'}\ &&\Rightarrow\ |C^s|<|C^{s'}| \,, \\
    s\vDash S^{dec}_C(\bar o)\  &\Rightarrow\ C^s \supsetneq C^{s'}\ &&\Rightarrow\ |C^s|>|C^{s'}| \,, \\
    s\vDash S^{eq}_C(\bar o)\   &\Rightarrow\ C^s = C^{s'}\          &&\Rightarrow\ |C^s|=|C^{s'}| \,, \\
    s\vDash S^{true}_C(\bar o)\ &\Rightarrow\ C^{s'}\neq\emptyset\   &&\Rightarrow\ |C^{s'}|>0 \,, \\
    s\vDash S^{false}_C(\bar o)\ &\Rightarrow\ C^{s'}=\emptyset\      &&\Rightarrow\ |C^{s'}|=0
  \end{alignat*}
  where $s'=res(a(\bar o),s)$ is the result of applying $a(o)$ in $s$.
\end{lemma}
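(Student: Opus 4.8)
The plan is to reduce every case to a single application of Theorem~\ref{thm:lift}. Recall that the lift theorem says, for the characteristic formula $\Psi=\Psi_C(\bar x)$ and the fixed tuple $\bar o$, that $s\vDash S^a_\Psi(\bar o,\bar u)\Rightarrow s'\vDash\Psi(\bar u)\Rightarrow s\vDash N^a_\Psi(\bar o,\bar u)$ for every tuple $\bar u$. Since $C^s=\{\bar u:s\vDash\Psi(\bar u)\}$ and $C^{s'}=\{\bar u:s'\vDash\Psi(\bar u)\}$, I would read this as two membership facts: $S^a_\Psi$ is a \emph{sufficient} test for $\bar u\in C^{s'}$, and $N^a_\Psi$ is a \emph{necessary} one (equivalently, by contraposition, $s\vDash\neg N^a_\Psi(\bar o,\bar u)$ implies $\bar u\notin C^{s'}$). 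These two reformulations are the only facts about the transition I would use.

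For the $S^{inc}_C$ case I would argue the two conjuncts separately. The conjunct $\forall\bar x(\Psi\Rightarrow S^a_\Psi)$ says that every $\bar u\in C^s$ satisfies $s\vDash S^a_\Psi(\bar o,\bar u)$, so the sufficient direction of the lift yields $\bar u\in C^{s'}$; hence $C^s\subseteq C^{s'}$. The conjunct $\exists\bar x(\neg\Psi\land S^a_\Psi)$ produces a witness $\bar u$ with $\bar u\notin C^s$ and $s\vDash S^a_\Psi(\bar o,\bar u)$, so $\bar u\in C^{s'}\setminus C^s$ and the inclusion is strict, giving $C^s\subsetneq C^{s'}$. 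The $S^{eq}_C$ case reuses this exact containment argument for its first conjunct and the dual argument (below) for its second, so that $C^s\subseteq C^{s'}$ and $C^{s'}\subseteq C^s$ combine to $C^s=C^{s'}$.

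The $S^{dec}_C$ and $S^{false}_C$ cases are symmetric but rely on the \emph{necessary} direction of the lift, used contrapositively. For $S^{dec}_C$, the conjunct $\forall\bar x(N^a_\Psi\Rightarrow\Psi)$ combined with the necessary direction ``$\bar u\in C^{s'}\Rightarrow s\vDash N^a_\Psi$'' gives $\bar u\in C^{s'}\Rightarrow\bar u\in C^s$, i.e.\ $C^{s'}\subseteq C^s$; the conjunct $\exists\bar x(\Psi\land\neg N^a_\Psi)$ yields a witness $\bar u\in C^s$ with $s\vDash\neg N^a_\Psi(\bar o,\bar u)$, hence $\bar u\notin C^{s'}$, so $C^s\supsetneq C^{s'}$. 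For $S^{true}_C=\exists\bar x(S^a_\Psi)$ the single witness lands in $C^{s'}$ by the sufficient direction, so $C^{s'}\neq\emptyset$; and for $S^{false}_C=\forall\bar x(\neg N^a_\Psi)$ every tuple falls outside $C^{s'}$ by the contrapositive necessary direction, so $C^{s'}=\emptyset$.

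Finally, the second implication in each line---passing from the set relation to the cardinality relation---is immediate once one notes that $\U_s$ is finite, so $C^s$ and $C^{s'}$ are finite sets; for finite sets, strict containment forces strict inequality of sizes, equality of sets forces equality of sizes, and non-emptiness forces positive cardinality. I expect no real obstacle here; the only points requiring care are bookkeeping the two directions of Theorem~\ref{thm:lift} (in particular remembering to use the $N$-direction contrapositively for the ``decrease'' and ``false'' cases) and explicitly invoking finiteness of the object universe for the cardinality step.
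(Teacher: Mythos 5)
Your proof is correct and takes essentially the same route as the paper: the paper's own justification of Lemma~\ref{lemma:sufficient} likewise turns each conjunct of the formulas $S^{\varphi}_C$ into membership facts about $C^{s'}$ via the two directions of Theorem~\ref{thm:lift} (with the $N$-direction used contrapositively for the \emph{dec} and \emph{false} cases, exactly as you do), and passes from the set relations to the cardinality relations by finiteness of the grounded instance's object universe. There are no gaps.
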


\begin{table*}[t]
  \centering
  \begin{tabular}{lcc}
    \toprule
    Reference & Formula \\
    \midrule
    $\B_X(a,p)(\bar z,\bar x)$   & $\bracket{p(\bar x)\in\Post} \lor (p(\bar x) \land \bracket{\neg p(\bar x)\notin\Post})$ \\
    %\midrule
    %$\B_X(a,p^+)(\bar z,x,y)$    & $\exists uv (\B_X(a,p^*)(\bar z,x,u) \land \B_X(a,p)(\bar z,u,v) \land \B_X(a,p^*)(\bar z,v,y))$ \\
    \midrule
    %$\B_N(a,p^*)(\bar z,x,y)$    & $(x=y) \lor p^*(x,y) \lor \exists uv \bigl( \ind{p(u,v)\in\Post} \land ( p^*(x,u) \lor p^*(v,y) ) \bigr)$ \\[.25em]
    $\B_N(a,p^*)(\bar z,x,y)$    & $p^*(x,y) \lor \exists uv \bigl( \bracket{p(u,v)\in\Post} \land   p^*(x,u) \land p^*(v,y)  \bigr)$ & (action adds at most 1 $p$-atom)  \\[.25em]
                                 & $p^*(x,y) \lor \exists uv \bigl( \bracket{p(u,v)\in\Post} \land ( p^*(x,u) \lor p^*(v,y) ) \bigr)$ & (action adds 2 or more $p$-atoms) \\[.25em]
    $\B_S(a,p^*)(\bar z,x,y)$    & $(x=y) \lor \bigl( p^*(x,y) \land \forall uv (\bracket{\neg p(u,v)\in\Post} \Rightarrow u=v) \bigr)$ \\
    \bottomrule
  \end{tabular}
  \caption{General base $\B$ for synthesis of any domain $\D$. $\Post(a(\bar z))$ is abbreviated by $\Post$. $X\in\{N,S\}$.
    There are two versions of the necessary condition for $p^*$; one for actions that add at most one atom $p(u,v)$,
    and the other for actions that add two or more atoms of this
    form. The first version uses a conjunction,
    $p^*(x,u) \land p^*(v,y)$, while the second version replaces it
    with a disjunction.
  }
  \label{table:base}
\end{table*}

We have expressed how the value of individual features
changes in transitions. Before providing the complete synthesis,
we need to express the value of preconditions of abstract actions,
and how the actions affect the different features.

Preconditions of abstract actions $\bar a$ on features $f=f_C$
are expressed by
$\Pre(\bar a)_C=\top$ if there is no precondition on $f$,
%$\Pre(\bar a)_C{=}\exists\bar x(\Psi_C(\bar x))$ if $f$ is boolean
$\Pre(\bar a)_C = \exists\bar x(\Psi_C(\bar x))$ if $f$ is boolean
(resp.\ numeric) and $\Pre(\bar a)$ contains $f$ (resp.\ $f>0$), and
%$\Pre(\bar a)_C{=}\forall\bar x(\neg\Psi_C(\bar x))$ if $f$ is boolean
$\Pre(\bar a)_C = \forall\bar x(\neg\Psi_C(\bar x))$ if $f$ is boolean
(resp.\ numeric) and $\Pre(\bar a)$ contains $\neg f$ (resp.\ $f=0$).

On the other hand, $\bar a$ partitions the set of features according
to their type and the effects of $\bar a$ on them:
\begin{alignat*}{1}
  \Delta^{inc}_{\bar a}\
    &=\ \{ n \in F : \text{$n$ is numeric and $\inc{n}\,\in\Eff(\bar a)$} \} \,, \\
  \Delta^{dec}_{\bar a}\
    &=\ \{ n \in F : \text{$n$ is numeric and $\dec{n}\,\in\Eff(\bar a)$} \} \,, \\
  \Delta^{eq}_{\bar a}\
    &=\ \{ f \in F : \text{$f$ is not affected by $\bar a$} \} \,, \\
  \Delta^{true}_{\bar a}\
    &=\ \{ f \in F : \text{$f$ is boolean and $f\in\Eff(\bar a)$} \} \,, \\
  \Delta^{false}_{\bar a}\
    &=\ \{ f \in F : \text{$f$ is boolean and $\neg f\in\Eff(\bar a)$} \} \,.
\end{alignat*}

\begin{definition}[Synthesis]
  \label{def:synthesis}
  Let $\T$ be a base for synthesis for domain $\D$, and let
  $\bar Q=(F,A_F,I_F,G_F)$ be a first-order abstraction for $\D$.
  For abstract action $\bar a$ in $A_F$ and schema $a(\bar z)$
  in $\D$, we define the formula $\Psi^a_{\bar a}(\bar z)$ as
  \begin{alignat*}{1}
    \Pre(a(\bar z))\ \land
    \bigwedge_{\varphi\in Chg} %\{inc,dec,eq,true,false\}}
      \bigwedge_{f_C\in\Delta^\varphi_{\bar a}}
        \Pre(\bar a)_C \land S^\varphi_C(\bar z) %\,.
  \end{alignat*}
  where $Chg{=}\{inc,dec,eq,true,false\}$.
  The guarantee for $\bar Q$ is %the collection
  $\G(\T,\bar Q)=\{\Phi_{\bar a}=\exists \bar z\bigl(\bigvee_{a\in\D} \Psi^{a}_{\bar a}(\bar z)\bigr):\bar a \in A_F\}$.
\end{definition}

\begin{theorem}[Main]
  \label{thm:main}
  Let $\T$ be a base for synthesis for domain $\D$, and let
  $\bar Q=(F,A_F,I_F,G_F)$ be a first-order abstraction. % for $\D$.
  Then, $\G(\T,\bar Q)$ is a \emph{valid guarantee} for $\D$
  (cf.\ Definition~\ref{def:guarantee}).
\end{theorem}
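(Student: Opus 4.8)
The plan is to unfold Definition~\ref{def:guarantee} and reduce the claim to verifying the ``represents'' relation feature by feature, with Lemma~\ref{lemma:sufficient} doing the heavy lifting. Fix an instance $P\in\Q(\D)$, a state $s$ in $P$, an action schema $a(\bar z)\in\D$, and a tuple $\bar o$ of objects with $s\vDash\Pre(\bar a)\land\Psi^a_{\bar a}(\bar o)$; I must establish $\repr{\bar a}{a(\bar o)}{s}$, namely that $a(\bar o)$ and $\bar a$ agree on preconditions at $s$ and $\bar s$ and that their effects over $F$ are similar in the sense of conditions~$(a)$--$(c)$. The first conjunct of $\Psi^a_{\bar a}$ is $\Pre(a(\bar o))$, so $a(\bar o)$ is applicable at $s$ and $s'=res(a(\bar o),s)$ is well-defined; together with the hypothesis $s\vDash\Pre(\bar a)$ this gives condition~$(1)$. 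I would also observe that since $\Delta^{inc}_{\bar a},\Delta^{dec}_{\bar a},\Delta^{eq}_{\bar a},\Delta^{true}_{\bar a},\Delta^{false}_{\bar a}$ partition $F$, the inner double conjunction of $\Psi^a_{\bar a}$ contains $\Pre(\bar a)_C$ for every feature $f_C$, so it already implies that $\Pre(\bar a)$ holds in $\bar s$.

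For the similarity of effects I would apply Lemma~\ref{lemma:sufficient} once per feature. Since $\Psi^a_{\bar a}(\bar o)$ entails $S^\varphi_C(\bar o)$ whenever $f_C\in\Delta^\varphi_{\bar a}$, the lemma reads off the actual change of each feature across $s\leadsto s'$: numeric features in $\Delta^{inc}_{\bar a}$ strictly increase, those in $\Delta^{dec}_{\bar a}$ strictly decrease, any feature in $\Delta^{eq}_{\bar a}$ satisfies $C^s=C^{s'}$ and is therefore unchanged, and boolean features in $\Delta^{true}_{\bar a}$ (resp.\ $\Delta^{false}_{\bar a}$) become true (resp.\ false) at $s'$. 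By construction each such membership coincides with the matching label in $\Eff(\bar a)$, so the ``declared effect $\Rightarrow$ actual change'' halves of conditions~$(b)$ and~$(c)$ hold immediately.

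The step I expect to require the most care is the \emph{converse} directions: condition~$(a)$ says a boolean feature that flips value must carry the corresponding literal in $\Eff(\bar a)$, and the ``only if'' half of~$(c)$ says a numeric feature that increases (resp.\ decreases) must carry $\inc{n}$ (resp.\ $\dec{n}$). These are not consequences of a single sufficient condition; they follow from the fact that the five $\Delta$-sets are mutually exclusive and exhaustive. For example, if a boolean $f$ changed from true to false at $s'$ but $\neg f\notin\Eff(\bar a)$, then $f$ would lie in $\Delta^{true}_{\bar a}$, forcing $f(s')=\top$ via $S^{true}_C$, or in $\Delta^{eq}_{\bar a}$, forcing $C^s=C^{s'}$ via $S^{eq}_C$; either contradicts the assumed flip, so in fact $\neg f\in\Eff(\bar a)$. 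The numeric case is symmetric: a feature not declared increasing must sit in $\Delta^{dec}_{\bar a}$ or $\Delta^{eq}_{\bar a}$, and $S^{dec}_C$ or $S^{eq}_C$ then precludes an increase, so each iff in~$(c)$ holds. Having checked conditions~$(1)$ and~$(a)$--$(c)$, I conclude $\repr{\bar a}{a(\bar o)}{s}$; as $P$, $s$, $a$, and $\bar o$ were arbitrary, $\G(\T,\bar Q)$ is a valid guarantee for $\D$.
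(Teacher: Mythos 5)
Your proof is correct and follows exactly the route the paper's machinery is built for: unfold Definition~\ref{def:synthesis}, read off the actual change of each feature from Lemma~\ref{lemma:sufficient} (the action's applicability having been secured by the $\Pre(a(\bar z))$ conjunct), and use the fact that the five $\Delta$-sets partition $F$ to obtain the converse directions of conditions~$(a)$ and~$(c)$ by contradiction. The paper states Theorem~\ref{thm:main} without spelling out this argument, and your write-up supplies precisely the intended proof, including the one genuinely delicate point (that ``undeclared'' changes are excluded because the feature must then fall in some other $\Delta$-class whose sufficient condition forbids the observed change).
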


We cannot yet provide a complete example because the synthesis
requires the conditions for the atoms in the language that are
given by the base for synthesis.
We now provide one such base, and apply it to the running example. % $\Q_{clear}$.

\subsection{A General Base for Synthesis}

The synthesis framework is parametrized by the base.
Trivial, non-informative, bases are easy to obtain:
it is enough to define sufficient and necessary conditions
as $\bot$ and $\top$ respectively for each atom in the language.
We provide a simple, general, and non-trivial base that can be
used with any domain $\D$.
The conditions provided by two different bases, or by the
same base for different but logically equivalent formulas,
do not need to be logically equivalent.

Table~\ref{table:base} shows a template for obtaining bases $\B(\D)$
for any domain $\D$. No formula in the template involves the predicate
$p^+$; i.e., all such predicates have been replaced by equivalent formulas
involving $p^*$. (Alternatively, we may define a base that only
resolves $p^+$ and assumes that no formula contains $p^*$.)
Two versions for the necessary condition for $p^*$ are provided:
one when the action $a$ adds at most one atom $p(u,v)$, and the
other when $a$ adds two or more such atoms.

The formulas in Table~\ref{table:base} involve ``bracket expressions''
that instantiate to first-order formulas.
For schema $a(\bar z)$ and tuple $\bar x$, a bracket
expression reduces to either to a logical constant $\top$ or $\bot$,
or to an expression involving equality over the variables
in $\bar z$ and $\bar x$, and the constant symbols in $\D$.
For example, $\bracket{\neg \on(x,y)\notin\Post}$ reduces
to $xy\neq z_1z_2$ for the action $\Newtower(z_1,z_2)$
since this action removes only $\on(z_1,z_2)$.

\begin{theorem}[General Base]
  \label{thm:base:general}
  Let $\D$ be a planning domain.
  The set $\B(\D)$ is a base for synthesis for domain $\D$.
\end{theorem}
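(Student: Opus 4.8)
The plan is to verify directly that $\B(\D)$ meets the requirements of Definition~\ref{def:base}; that is, that for every action schema $a(\bar z)$ and predicate $p$ (and closure $p^*$) the two sandwich implications hold for every problem $P\in\Q(\D)$, state $s$, applicable grounding $a(\bar o)$, and tuple of objects. There are two atom types to handle: ordinary predicates $p$, for which the table supplies a single formula $\B_X(a,p)$ serving as both the necessary and the sufficient condition, and the closure $p^*$, for which separate formulas $\B_N(a,p^*)$ and $\B_S(a,p^*)$ are given. We resolve only $p^*$ and regard $p^+$ as the abbreviation $\exists z(p(x,z)\land p^*(z,y))$, which is permitted by the definition. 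For the ordinary case I must prove an \emph{equivalence}, whereas for $p^*$ I must prove the two one-sided implications.

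First I would dispatch the ordinary predicates. By STRIPS semantics, $res(a(\bar o),s)\vDash p(\bar u)$ iff either $a(\bar o)$ adds $p(\bar u)$, or $p(\bar u)$ already holds in $s$ and $a(\bar o)$ does not delete it. This is precisely what the disjunction $\bracket{p(\bar x)\in\Post}\lor(p(\bar x)\land\bracket{\neg p(\bar x)\notin\Post})$ expresses once the bracket expressions are unfolded: for the fixed schema $a(\bar z)$ the membership tests $p(\bar x)\in\Post$ and $\neg p(\bar x)\in\Post$ depend only on whether $\bar x$ matches the schematic add/delete atoms, so each reduces to a logical constant or to an equality constraint among the variables of $\bar z$, $\bar x$ and the constants of $\D$, as illustrated by the $\Newtower$ example. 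Checking that the unfolded formula holds in $s$ exactly when $p(\bar u)$ holds in $res(a(\bar o),s)$ is then a routine case analysis on add/delete membership, and since the same formula is used for both $X=N$ and $X=S$ both sandwich implications collapse to this single equivalence.

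Next I would treat the necessary condition $\B_N(a,p^*)$. Here $p^*$ interpreted in $s$ denotes reachability in the pre-state graph of old $p$-edges, while membership $(u,v)\in C^{res(a(\bar o),s)}_{p^*}$ means reachability in the post-state graph, whose edges are the non-deleted old $p$-edges together with the atoms $a(\bar o)$ adds. Given a post-state path from $x$ to $y$, I would argue by cases: if the path uses no added edge, it lives entirely among non-deleted old edges, hence $p^*(x,y)$ holds already in $s$, giving the first disjunct. If it uses at least one added edge, I would decompose at the \emph{first} added edge $(u,v)$ traversed; the prefix preceding it ends at $u$ and uses no added edge, hence certifies $p^*(x,u)$ in $s$. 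When $a$ adds at most one atom, this decomposition extends symmetrically at the last use of that unique edge to certify $p^*(v,y)$ as well, justifying the conjunctive version $p^*(x,u)\land p^*(v,y)$; when $a$ may add several atoms the suffix can still traverse further added edges, so only the disjunct $p^*(x,u)$ survives, which is exactly why the table weakens the conjunction to $p^*(x,u)\lor p^*(v,y)$ in that version. A small witness in which two added edges are bridged by an old path confirms that the conjunctive version would be \emph{unsound} in the many-atom case.

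Finally I would verify the sufficient condition $\B_S(a,p^*)$. The disjunct $x=y$ yields reachability in the post-state by reflexivity of $p^*$. For the other disjunct, assuming $p^*(x,y)$ in $s$ together with $\forall uv(\bracket{\neg p(u,v)\in\Post}\Rightarrow u=v)$, I would take a \emph{simple} old-edge path from $x$ to $y$; its edges join distinct vertices, so the deletion condition, which only permits self-loops to be removed, leaves every edge of the path intact, and since added edges never destroy reachability the path survives into the post-state, establishing $(u,v)\in C^{res(a(\bar o),s)}_{p^*}$. The main obstacle I anticipate is the $p^*$ necessary direction: formalizing the ``first added edge'' decomposition so that prefixes provably avoid added edges, and cleanly separating the one-atom case, where the conjunction is sound, from the many-atom case, where it is not — the delicate point the two table rows are designed to capture.
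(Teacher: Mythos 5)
The paper never actually proves Theorem~\ref{thm:base:general}: even in this extended version the theorem is stated bare, with no argument in the main text or the appendix (the appendix only derives the Blocksworld guarantees). So there is no paper proof to compare against; your proposal has to stand on its own, and it does. Your three-part verification is exactly what Definition~\ref{def:base} demands and each part is sound: (i) for ordinary atoms, the STRIPS update $res(a(\bar o),s)=(s\setminus Del)\cup Add$ makes $\bracket{p(\bar x)\in\Post}\lor(p(\bar x)\land\bracket{\neg p(\bar x)\notin\Post})$ equivalent to post-state truth of $p(\bar u)$, and since $\B_N=\B_S$ here both sandwich implications follow; (ii) for $\B_N(a,p^*)$, decomposing a post-state path at the \emph{first} added edge gives $p^*(x,u)$ in $s$ (hence the disjunctive, many-atom version is necessary), and when the schema adds at most one atom, also cutting at the \emph{last} use of that unique edge gives $p^*(v,y)$, justifying the conjunctive version; your counterexample with two added edges bridged by an old path correctly explains why the table needs two rows; (iii) for $\B_S(a,p^*)$, the simple-path (or shortest-path) argument shows that when only self-loops can be deleted, a witnessing path in $s$ survives into $res(a(\bar o),s)$, and the $x=y$ disjunct is covered by reflexivity of $p^*$. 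You also correctly dispose of $p^+$ via the definitional allowance that a base need only cover one of the two closures, with $p^+(x,y)\equiv\exists z(p(x,z)\land p^*(z,y))$ handled by the structural lifting. The only implicit dependency worth stating explicitly is the faithfulness of the bracket unfolding (that $\bracket{p(\bar x)\in\Post}$, instantiated at $\bar z\mapsto\bar o$, $\bar x\mapsto\bar u$, holds in $s$ iff the ground action $a(\bar o)$ adds $p(\bar u)$, possibly as a disjunction of equality constraints when the schema has several add atoms over $p$); you use this in both closure cases but argue it only for the ordinary-predicate case. That is a presentational gap, not a mathematical one.
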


\begin{corollary}
  \label{cor:base:general}
  Let $\D$ be a domain and let $\bar Q=(F,A_F,I_F,$ $G_F)$ be a
  first-order abstraction for $\D$. The guarantee $\G(\B(\D),\bar Q)$ is valid for
  $\D$ and, hence, $\bar Q$ is a \emph{sound abstraction} for the
  generalized problem $\Q=\{P\in\Q(\D): \complies{\bar Q}{P}$ and
  $\Pre(\bar a)\Rightarrow\Phi_{\bar a}$ holds in the reachable
  states in $P\}$.
\end{corollary}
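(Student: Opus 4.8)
The plan is to obtain the corollary purely by composing three results already established in the paper; no new argument about concepts, bases, or the representation relation is needed, since all of that work has been discharged upstream.

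First I would instantiate the general machinery at the concrete base. By Theorem~\ref{thm:base:general}, the set $\B(\D)$ is a base for synthesis for $\D$, so it satisfies the hypotheses of Definition~\ref{def:base}. With this in hand I would apply Theorem~\ref{thm:main} taking $\T:=\B(\D)$: since $\B(\D)$ is a base for synthesis and $\bar Q$ is, by assumption, a first-order abstraction for $\D$, the theorem yields at once that $\G(\B(\D),\bar Q)$ is a valid guarantee for $\D$ in the sense of Definition~\ref{def:guarantee}. This already discharges the first assertion of the corollary.

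For the second assertion I would feed this valid guarantee into Theorem~\ref{thm:guarantee:soundness} with $\G:=\G(\B(\D),\bar Q)$. Its hypothesis---that the guarantee is valid---is precisely what the previous step established, so its conclusion applies directly: $\bar Q$ is a sound abstraction for the generalized problem $\Q$ consisting of all $P\in\Q(\D)$ with $\complies{\bar Q}{P}$ whose reachable states satisfy $\Pre(\bar a)\Rightarrow\Phi_{\bar a}$ for every $\bar a\in A_F$. Since the formulas $\Phi_{\bar a}$ are in both statements the members of the guarantee $\G(\B(\D),\bar Q)$, the set $\Q$ in the corollary coincides with the one delivered by Theorem~\ref{thm:guarantee:soundness}, completing the argument.

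There is no genuine obstacle at this level, as the corollary is a corollary in the strict sense: the substantive content lives in the cited theorems (the correctness of the necessary and sufficient conditions in $\B(\D)$, the structural induction of Theorem~\ref{thm:lift} that lifts them to all formulas, and the reduction of the representation relation to the formulas $\Psi^a_{\bar a}$). The only point requiring care is bookkeeping: verifying that the substitution $\T:=\B(\D)$ is admissible, and in particular that the objects $\Phi_{\bar a}$ named in Theorem~\ref{thm:guarantee:soundness} and in the statement of the corollary are the very same formulas. Once that identification is made explicit, the three implications chain together immediately.
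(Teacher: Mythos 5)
Your proposal is correct and matches the paper's (implicit) derivation exactly: the corollary is obtained by chaining Theorem~\ref{thm:base:general} (so $\B(\D)$ is a base for synthesis), Theorem~\ref{thm:main} with $\T=\B(\D)$ (so $\G(\B(\D),\bar Q)$ is a valid guarantee), and Theorem~\ref{thm:guarantee:soundness} (so $\bar Q$ is sound for the stated $\Q$). Your bookkeeping remark about identifying the formulas $\Phi_{\bar a}$ of Definition~\ref{def:synthesis} with those in Theorem~\ref{thm:guarantee:soundness} is the only point of care, and you handle it correctly.
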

%\begin{proof}
%  By Theorems~\ref{thm:main} and \ref{thm:base:general},
%  $\G(\B(\D),\bar Q)$ is a valid guarantee for $\D$.
%  By Theorem~\ref{thm:guarantee:soundness}, $\bar Q$
%  is a sound abstraction for $\Q$.
%\end{proof}

\medskip
\begin{example}
  The abstraction $\bar Q_{clear}\!=\!(F,A_F,I_F,G_F)$ has
  a single feature $n{=}n_C$ for $\Psi_C(x){=}\exists y(\on(x,y) \land  \on^*(y,\constant{A}))$.
  $\D_{clear}$ has two schemas $a_1=\Newtower(z_1,z_2)$ and $a_2=\Move(z_3,$ $z_4,z_5)$.
  The condition $S^{dec}_C(z_1,z_2)$ for $a_1$ is equivalent (after simplification) to
  \begin{alignat*}{1}
    \label{eq:dec:newtower}
    \on(z_1,\!z_2) \land \on^*(z_2,\!\constant{A}) \land \forall y(\on(z_1,\!y)\!\land\!\on^*(y,\!\constant{A}) \Rightarrow y{=}z_2).
  \end{alignat*}
  The formula $\Psi^{a_1}_{\bar a}(z_1,z_2)$ is this formula conjoined with $\clear(z_1)$.
  For action $a_2$, $S^{dec}_C(z_3,z_4,z_5)$ is
  \begin{alignat*}{1}
    &\on^+(z_3,\constant{A}) \land \neg \on^*(z_5,\constant{A})\ \land \\
    &\qquad\qquad\qquad\forall y( \on(z_3,y) \land \on^*(y,\constant{A}) \Rightarrow y=z_4) \,.
  \end{alignat*}
  The formula $\Psi^{a_2}_{\bar a}(z_1,z_2)$ is this formula
  conjoined with $\clear(z_3)$, $\on(z_3,z_4)$, and $\clear(z_5)$.
  The guarantee for $\bar a$ is
  $\Phi_{\bar a}=\exists\bar z\bigl(\Psi^{a_1}_{\bar a}(z_1,z_2) \lor \Psi^{a_2}_{\bar a}(z_3,z_4,z_5)\bigr)$.

  By Corollary~\ref{cor:base:general}, $\Q_{clear}$ is sound for
  instances with goal $\clear(\constant{A})$ and
  \emph{reachable states} that satisfy
  $\exists x(\on^+(x,\constant{A})) \Rightarrow \Phi_{\bar a}$.
  Namely, if there is a block above $\constant{A}$, then either
  there are blocks $z_1$ and $z_2$ such that $z_1$ is clear and
  on $z_2$, $z_2$ is $\constant{A}$ or above it, and $z_2$ mediates
  any ``path of blocks'' from $z_1$ to $\constant{A}$,
  or there are blocks $z_3$, $z_4$ and $z_5$ such that $z_3$ is
  clear, on $z_4$, and above $\constant{A}$, $z_5$ is clear and
  not equal to $\constant{A}$ or above it, and $z_4$ mediates
  any path from $z_3$ to $\constant{A}$.
  This formula indeed holds in all ``real instances'' of Blocksworld.
  $\bar Q_{clear}$ is then sound for all of them,
  and the policy $\bar\pi_{clear}$ that solves the abstraction is a
  generalized plan for $\Q_{clear}$.
  (See the appendix for a complete derivation of the sufficient conditions.)
\end{example}

\section{Necessary Conditions and Invariants}

The conditions provided by $\G$ are only sufficient;
%that is, one cannot draw any conclusion about an
%abstraction $\bar Q$ and instance $P$ if some reachable
%state $s$ in $P$ satisfies $\Pre(\bar a)$ but not $\Phi_{\bar a}$.
%For reasoning in such cases, one would need \emph{necessary}
%rather than sufficient conditions for soundness.
i.e., no conclusion about an abstraction $\bar Q$ for instance $P$
can be drawn if some reachable state $s$ in $P$ satisfies $\Pre(\bar a)$ but not $\Phi_{\bar a}$.
For reasoning in such cases, one needs \emph{necessary}
rather than sufficient conditions for soundness.

For obtaining necessary conditions, we need to assume
that the features in the abstraction are indeed monotone;
i.e., their value changes in a local manner along
the transitions in any instance $P$ in $\Q(\D)$.
For example, under monotonicity, $f$ decreases in %the transition
$s\leadsto res(a(\bar o),s)$ iff
$C^{res(a(\bar o),s)}\subsetneq C^s$ iff
\begin{alignat*}{1}
  s \vDash \exists\bar x(\Psi_C(\bar x) \land \neg\Psi^a_C(\bar o,\bar x))
  %s \vDash \forall\bar x(\Psi^a_C(\bar o,\bar x) \Rightarrow \Psi_C(\bar x)) \land \exists\bar x(\Psi_C(\bar x) \land \neg\Psi^a_C(\bar o,\bar x))
\end{alignat*}
(where $C$ is the concept that defines $f$) only if
\begin{alignat*}{1}
  s \vDash \exists\bar x(\Psi_C(\bar x) \land \neg S^a_C(\bar o,\bar x)) \,.
  %s \vDash \forall\bar x(S^a_C(\bar o,\bar x) \Rightarrow \Psi_C(\bar x)) \land \exists\bar x(\Psi_C(\bar x) \land \neg S^a_C(\bar o,\bar x)) \,.
\end{alignat*}
We obtain necessary conditions similarly as before:
\Omit{
%% Our sufficient conditions are based
The reason by which $\G$ only provides sufficient conditions is
that it only considers changes on the concept's extensions by
subset containment (cf.\ Lemma~\ref{lemma:sufficient}).
However, we may assume that concepts change extension in
such a way, and then obtain necessary conditions:

\begin{definition}[Monotonicity]
  \label{def:monotoinicity}
  Let $\D$ be a domain
  %let $C$ be a $\D$-definable concept,
  and let $P$ be an instance in $\Q(\D)$.
  A concept $C$ for $\Q(\D)$ is \emph{monotone} for $P$ if for every \emph{reachable state}
  $s$ in $P$, and action $a$ that is applicable in $s$, either
  $C(P,s)\subseteq C(P,s')$, $C(P,s)\supseteq C(P,s')$, or
  $C(P,s)=C(P,s')$, where $s'{=}res(a,s)$.
  A first-order abstraction $\bar Q$ is monotone for $P$ if
  each feature $f$ in $\bar Q$ is given by a concept $C$ that
  is $\D$-definable and monotone. % for $P$.
\end{definition}

%Like before, we define necessary conditions for the change
%in cardinality along transitions $s\leadsto s'=res(a(\bar o),s)$.
%The formulas are denoted as $N^\varphi_C(\bar z)=N_{\varphi(C,\bar z)}$ for $\varphi\in Change$. %\{inc,dec,eq,\top,\bot\}$.
%Therefore,
}

\begin{theorem}[Necessary Conditions]
  Let $\D$ be a domain, let $\bar Q$ be a first-order abstraction for $\D$,
  and let $P$ be an instance in $\Q(\D)$ such that $\bar Q$ is \emph{sound and
  monotone} for $P$.
  If $s$ is a reachable state in $P$, then
  \[ \textstyle s\ \vDash\ \Pre(\bar a) \Rightarrow \exists\bar z\bigl(\bigvee_i \Gamma^{a_i}_{\bar a}(\bar z) \bigr) \]
  where $\Gamma^{a_i}_{\bar a}(\bar z)$ is the formula
  \[ \Pre(a_i(\bar z))\ \land
       \bigwedge_{\varphi\in\{inc,dec,eq,\top,\bot\}} % Chg} %\{inc,dec,eq,\top,\bot\}}
         \ \bigwedge_{f_C \in \Delta^\varphi_{\bar a}}
           %\Pre(\bar a)_C \land
           N^\varphi_C(\bar z) \,. \]
  %where $Chg{=}\{inc,dec,eq,true,false\}$.
\end{theorem}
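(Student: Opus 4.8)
The plan is to run the argument \emph{dually} to Theorem~\ref{thm:main}: instead of starting from a witnessing tuple and deriving representation, I start from soundness, extract the ground action that it guarantees, and then read off the necessary conditions that this action forces. Concretely, fix a reachable state $s$ with $s\vDash\Pre(\bar a)$. Since $\bar Q$ is sound for $P$ and $\Pre(\bar a)$ holds in $\bar s$, the definition of a sound abstract action yields a schema $a_i$ and a tuple $\bar o$ with $\repr{\bar a}{a_i(\bar o)}{s}$. I then claim that this very pair witnesses the existential disjunction, i.e.\ $s\vDash\Gamma^{a_i}_{\bar a}(\bar o)$. The conjunct $\Pre(a_i(\bar o))$ is immediate from clause~(1) of ``represents'' (the precondition of $a_i(\bar o)$ holds in $s$), so the real work is to verify $s\vDash N^\varphi_C(\bar o)$ for every $\varphi$ and every feature $f_C\in\Delta^\varphi_{\bar a}$, writing $s'=res(a_i(\bar o),s)$ throughout.

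The key idea for the feature conjuncts is to convert the representation clauses (a)--(c), which speak about \emph{values} of features, into \emph{set relations} between $C^s$ and $C^{s'}$ using monotonicity (Definition~\ref{def:monotoinicity}), and then push these through the Lift Theorem~\ref{thm:lift}. For $f_C\in\Delta^{inc}_{\bar a}$, clause~(c) gives $|C^s|<|C^{s'}|$; since $C$ is monotone, the only case compatible with a strict cardinality increase is $C^s\subsetneq C^{s'}$, so I may pick $\bar u\in C^{s'}\setminus C^s$. Then $s\vDash\neg\Psi_C(\bar u)$, and the necessary (right-hand) direction of Theorem~\ref{thm:lift} gives $s\vDash N^a_\Psi(\bar o,\bar u)$, whence $s\vDash N^{inc}_C(\bar o)$. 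Dually, for $f_C\in\Delta^{dec}_{\bar a}$ monotonicity forces $C^s\supsetneq C^{s'}$; a witness $\bar u\in C^s\setminus C^{s'}$ satisfies $\Psi_C(\bar u)$, and the \emph{contrapositive} of the sufficient (left-hand) direction of Theorem~\ref{thm:lift} gives $s\vDash\neg S^a_\Psi(\bar o,\bar u)$, so $s\vDash N^{dec}_C(\bar o)$.

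The boolean-effect cases use only clause~(b) and Theorem~\ref{thm:lift}, and need no monotonicity: for $f_C\in\Delta^{true}_{\bar a}$ (the $\top$ entry of the theorem) the feature is true in $s'$, so $C^{s'}\neq\emptyset$, and the necessary direction of Lift applied to any $\bar u\in C^{s'}$ yields $s\vDash N^{true}_C(\bar o)$; for $f_C\in\Delta^{false}_{\bar a}$ (the $\bot$ entry) the feature is false in $s'$, so $C^{s'}=\emptyset$, and the contrapositive of the sufficient direction gives $s\vDash\forall\bar x\,\neg S^a_\Psi(\bar o,\bar x)=N^{false}_C(\bar o)$. For $f_C\in\Delta^{eq}_{\bar a}$ that is \emph{numeric}, clause~(c) forces $|C^s|=|C^{s'}|$, and monotonicity upgrades equal cardinality to $C^s=C^{s'}$; both conjuncts of $N^{eq}_C$ then follow by feeding the identity $C^s=C^{s'}$ into the two directions of Theorem~\ref{thm:lift}.

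I expect the delicate step to be exactly the $eq$ case, where set \emph{equality} must be squeezed out of mere value-equality. For numeric features cardinality equality plus monotonicity settles it cleanly, but for an \emph{unaffected boolean} feature clause~(a) only preserves the truth value $f(s)=f(s')$: when both are false the extensions are empty and $N^{eq}_C$ holds vacuously, whereas when both are true monotonicity alone still permits a strict inclusion, so I must argue that the remaining universal conjunct of $N^{eq}_C$ continues to hold via the appropriate direction of Theorem~\ref{thm:lift} (in particular that no element of the symmetric difference can satisfy the corresponding $S^a_\Psi$/$N^a_\Psi$ condition). This reconciliation is the main obstacle; once it is dispatched, assembling the per-feature verifications gives $s\vDash\Gamma^{a_i}_{\bar a}(\bar o)$, hence $s\vDash\Pre(\bar a)\Rightarrow\exists\bar z\bigl(\bigvee_i\Gamma^{a_i}_{\bar a}(\bar z)\bigr)$, as required.
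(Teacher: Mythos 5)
You take the intended route---the same one the paper sketches in the lines just preceding the theorem (the paper gives no explicit proof beyond that sketch): soundness at a reachable $s$ with $s\vDash\Pre(\bar a)$ yields a ground action $a_i(\bar o)$ with $\repr{\bar a}{a_i(\bar o)}{s}$, and each conjunct $N^\varphi_C(\bar o)$ is then extracted from the representation clauses via Theorem~\ref{thm:lift}, using monotonicity to turn value changes into inclusions between $C^s$ and $C^{s'}$. Your handling of $inc$, $dec$, $\top$, $\bot$, and of \emph{numeric} features in $\Delta^{eq}_{\bar a}$ is correct; in fact, monotonicity is only genuinely needed in the numeric $eq$ case, since a strict change in cardinality already forces a nonempty set difference in the required direction.

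However, the step you defer---the unaffected boolean feature whose value stays true---is not a reconciliation that can be dispatched: the claim you would need is false, and it breaks the theorem as stated, not merely your proof. Representation clause~(a) only forces $f_C(s)=f_C(s')$, i.e.\ preservation of nonemptiness, whereas $N^{eq}_C(\bar o)$ demands, whenever the base is tight, preservation of the \emph{extension} itself: with the general base $\B(\D)$ of Table~\ref{table:base}, $S^a_\Psi=N^a_\Psi$ is exact for closure-free $\Psi$, so any $\bar u\in C^{s'}\setminus C^s$ satisfies $s\vDash S^a_\Psi(\bar o,\bar u)\land\neg\Psi_C(\bar u)$ and falsifies the conjunct $\forall\bar x(S^a_\Psi\Rightarrow\Psi)$; soundness cannot exclude such $\bar u$, and the outer $\exists\bar z$ does not rescue the statement. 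Concretely: take predicates $p,q$; one schema $a(z)$ with $\Pre=\{q(z)\}$, $\Eff=\{p(z),\neg q(z)\}$; boolean features $f_C,g_D$ with $\Psi_C(x)=p(x)$, $\Psi_D(x)=q(x)$; abstract action $\bar a=\abst{f_C,g_D}{\neg g_D}$; initial state $s_0=\{p(o_1),q(o_2)\}$, with trivial goal and $I_F=\{f_C,g_D\}$ so that $\complies{\bar Q}{P}$ holds. Both concepts are monotone, and $\bar a$ is sound: the only reachable state satisfying $\Pre(\bar a)$ is $s_0$, where $a(o_2)$ represents $\bar a$ ($f_C$ stays true, $g_D$ turns false). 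Yet $\Gamma^{a}_{\bar a}(o_2)$ contains $N^{eq}_C(o_2)$, whose second conjunct reduces to $p(o_2)$, false at $s_0$, while $o_1$ fails $\Pre(a(z))$; hence $s_0\vDash\Pre(\bar a)$ but $s_0\not\vDash\exists z\,\Gamma^{a}_{\bar a}(z)$. The repair must come from the statement rather than the proof: either restrict $\Delta^{eq}_{\bar a}$ in the theorem to numeric features (equivalently, assume unaffected boolean features preserve extensions, not just truth values), or replace $N^{eq}_C$ for boolean features by the genuine necessary condition for truth-value preservation, e.g.\ $(\exists\bar x\,\Psi\land\exists\bar x\,N^a_\Psi)\lor\forall\bar x(\neg\Psi\land\neg S^a_\Psi)$. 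With such a repair, your argument goes through verbatim; note also that the paper's own pre-theorem sketch, which only works out the $dec$ case, silently glosses over exactly this point.
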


Necessary conditions are useful for showing that $\bar Q$ is
not sound for instance $P$: it is enough to find a reachable
state where the condition does not hold.
On the other hand, we cannot infer that $\Q$ is sound for $P$
only if the necessary condition does hold in $P$.

If the abstraction $\bar Q$ is sound for $\Q$, every instance $P$
in $\Q$ for which $\bar Q$ is monotone must satisfy the necessary
conditions.
Such conditions, that by definition hold in all reachable states,
are \emph{state invariants} in $P$. Therefore, sufficient and necessary
conditions for the soundness of an abstraction $\bar Q$ can be regarded
as candidates for invariants, which may then be verified by a theorem
prover in order to avoid an explicit enumeration of reachable states
\cite{hol4}.

\section{Other Examples}

In this section we present two additional examples. One for a
gripper problem with an arbitrary number of balls and grippers,
and the other about connectivity in directed graphs.

\subsection{Gripper}

%Gripper is a standard benchmark in planning and  consists of a robot
%with two grippers, left and right, two rooms $A$ and $B$, and a
%number of balls that are initially at room $B$ and that need to be
%moved to destination room $A$.
We consider a domain $\D$ for Gripper with constants
%In the domain $\D$ there are constants
$\constant{A}$ (destination) and $\constant{B}$ (origin)
for the rooms, objects $l$ and $r$ for the grippers, and objects $b_i$
for the different balls.
The predicates are $\at(r)$ and $\In(b,r)$
for the position of the robot and balls in rooms, $\carry(b,g)$
to indicate when ball $b$ is held by gripper $g$,
and $\free(g)$ to indicate that gripper $g$ is not holding any ball.
The action schemas are $a_1{\,=\,}\Move(r_1,r_2)$ for moving the robot,
and $a_2{\,=\,}\Pick(b,g,r)$ and $a_3{\,=\,}\Drop(b,g,r)$ for picking and
dropping balls in rooms using specific grippers.

\citeay{bonet:aaai2019} learn an abstraction %from a small sample of transitions
that is made of a boolean feature $X$, % to indicate when the robot is in $\constant{A}$,
and numerical features $B$, $C$, and $G$:
\begin{enumerate}[--]
  \item $X=\{ r : \at(r) \land r{=}\constant{A} \}$ tells whether the robot is in $\constant{A}$,
  \item $B=\{ b : \exists r(\In(b,r) \land r{\neq}\constant{A}) \}$ counts the balls in $\constant{B}$,
  \item $C=\{ b : \exists g(\carry(b,g)) \}$ counts the balls being held, and
  \item $G=\{ g : \free(g) \}$ counts the free grippers.
\end{enumerate}
The abstract actions in abstraction $\bar Q_{gripper}$ are:
\begin{enumerate}[--]
  \item $\text{pick} = \abst{\neg X,B>0,G>0}{\dec{B},\dec{G},\inc{C}}$,
  \item $\text{drop} = \abst{X,C>0}{\dec{C},\inc{G}}$,
  \item $\text{go1} = \abst{\neg X,B=0,C>0,G>0}{X}$,
  \item $\text{go2} = \abst{\neg X,C>0,G=0}{X}$,
  \item $\text{leave} = \abst{X,C=0,G>0}{\neg X}$.
\end{enumerate}
Both, go1 and go2, move the robot from $\constant{A}$ to $\constant{B}$.
Go1 moves the robot that still has room to pick more balls only when there
are no more balls to be picked at $\constant{B}$; go2 moves the robot when
it cannot hold any more balls.
\Omit{
The abstract actions partition the features according to
their effects on them:
\begin{enumerate}[--]
  \item pick: $\Delta^{inc}=\{C\}, \Delta^{dec}=\{B,G\}, \Delta^{eq}=\{X\}$,
  \item drop: $\Delta^{inc}=\{G\}, \Delta^{dec}=\{C\}, \Delta^{eq}=\{X,B\}$,
  \item go1:  $\Delta^{true}=\{X\}, \Delta^{eq}=\{B,C,G\}$,
  \item go2:  $\Delta^{true}=\{X\}, \Delta^{eq}=\{B,C,G\}$,
  \item leave: $\Delta^{false}=\{X\}, \Delta^{eq}=\{B,C,G\}$.
\end{enumerate}
}
The formulas $\Psi^{a_i}_{\bar a}(\bar z)$ %that make up the guarantees
are $\bot$ except for (conditions in $\Pre(a_i)$ removed to fit space):
\begin{alignat*}{1}
  \Psi^{a_2}_\text{pick}\  &=\       \forall x[\neg\carry(b,x)] \land r\neq \constant{A}\,, \\ %OK
  \Psi^{a_3}_\text{drop}\  &=\       \at(A) \land \neg\free(g) \land (r{=}\constant{A} \lor \exists x[\In(b,x) \land x{\neq}\constant{A}])\,, \\
  %\neg\free(g) \land r = \constant{A}\,, \\ %OK
  \Psi^{a_1}_\text{go1}\   &=\       \exists xy[\carry(x,y)] \land \exists x[\free(x)]\ \land \\
                           &\quad\ \ \ \forall xy[\In(x,y)\Rightarrow y=\constant{A}] \land r_1\neq\constant{A}\land r_2=\constant{A}\,, \\ %OK
  \Psi^{a_1}_\text{go2}\   &=\       \exists xy[\carry(x,y)] \land \forall x[\neg\free(x)] \land r_1 \neq \constant{A} \land r_2=\constant{A}\,, \\
  \Psi^{a_1}_\text{leave}\ &=\       \forall xy[\neg\carry(x,y)] \land \exists x[\free(x)] \land r_1 = \constant{A} \land r_2 \neq \constant{A} \,.
  %%
% \Psi^{a_2}_\text{pick}\  &=\       \at(r) \land \In(b,r) \land \forall x[\neg\carry(b,x)] \land r\neq \constant{A}\,, \\
% \Psi^{a_3}_\text{drop}\  &=\       \at(r) \land \carry(b,g) \land \neg\free(g) \land r = \constant{A}\,, \\
% \Psi^{a_1}_\text{go1}\   &=\       \at(r_1) \land \exists xy[\carry(x,y)] \land \exists x[\free(x)]\ \land \\
%                          &\quad\ \ \forall xy[\In(x,y)\Rightarrow r=\constant{A}] \land r_1\neq\constant{A}\land r_2=\constant{A}\,, \\
% \Psi^{a_1}_\text{go2}\   &=\       \at(r_1) \land \exists xy[\carry(x,y)] \land \forall x[\neg\free(x)]\ \land \\
%                          &\quad\ \ r_1 \neq \constant{A} \land r_2=\constant{A}\,, \\
% \Psi^{a_1}_\text{leave}\ &=\       \at(r_1) \land \forall xy[\neg\carry(x,y)] \land \exists x[\free(x)]\ \land \\
%                          &\quad\ \ r_1 = \constant{A} \land r_2 \neq \constant{A} \,.
\end{alignat*}
For example, $\Psi^{a_1}_\text{pick}=\Psi^{a_3}_\text{pick}=\bot$ means
that the abstract pick action cannot be instantiated by any ground instance
of $\Move(r_1,r_2)$ or $\Drop(b,g,r)$: the first changes the feature $X$
that is not affected by pick, and the second increases $G$ in contradiction
with the effect $\dec{G}$.

On the other hand, $\Psi^{a_2}_\text{pick}=\forall x[\neg\carry(b,x)] \land r\neq \constant{A}$
means that the ground action $\Pick(b,g,r)$ instantiates the pick action
when $r\neq\constant{A}$, otherwise the effect $\dec{B}$ is not achieved,
and when the ball $b$ is not being held by any gripper $x$, otherwise
$\inc{C}$ is not met.
$\Psi^{a_2}_\text{pick}$ is logically implied at reachable states
by the preconditions of $\Pick(b,g,\constant{B})$ and the mutex
information that is polynomially computable.
Indeed, the preconditions are $\at(\constant{B})$,
$\In(b,\constant{B})$ and $\free(g)$, while the mutex invariants
include $\neg\In(b,r)\lor\neg\carry(b,g)$ for any $b$, $r$, and $g$.
Actually, we can show that the mutex information is enough to show
$\Pre(\bar a)\Rightarrow \exists\bar z(\bigwedge_i \Psi^{a_i}_{\bar a})$
for all the actions $\bar a$ in the abstraction.
Hence, the abstraction is sound for any instance of Gripper.

\subsection{Connectivity in Graphs}

We now consider a graph problem that involves the connectivity
of two designated vertices $\constant{s}$ and $\constant{t}$.
The domain $\D$ has constants $\constant{s}$ and $\constant{t}$,
a single binary predicate $E(x,y)$, and a single action schema
$\Link(x,y)$ that adds $E(x,y)$ and has no precondition.
The exact form of the initial situation or goal is not
relevant in the following discussion.

The abstraction $\bar Q$ defines two features: a boolean feature
$conn$ that is true iff $\constant{s}$ and $\constant{t}$ are
connected, and a numerical feature $n$ that
counts the total number of edges in the graph.
These features are defined with the concepts:
\begin{alignat*}{1}
  conn\ &=\ \{ (x,y) : E^*(x,y) \land x=\constant{s} \land y=\constant{t} \} \,, \\
  n\    &=\ \{ (x,y) : E(x,y) \} \,.
\end{alignat*}
There is a single abstract action $\bar a=\abst{}{\inc{n}}$.
Since $conn$ exists as a feature in $\bar Q$ and $\bar a$
does not affect it, no instantiation of $\bar a$ may modify
the $\constant{st}$-connectivity of the graph.
%that has no preconditions and whose effects increase the number of edges.
%Since $\bar a$ has no preconditions on $conn$, this action
%cannot change the $\constant{st}$-connectivity of the
%graph.
%either both are connected before and after the
%application of $\bar a$, or both are disconnected before and
%after the application.
%Clearly, $\bar a$ increases $n$ and does not affect $conn$.
The guarantee $\Phi_{\bar a}$ for $\bar a$ is then
\begin{alignat*}{1}
  %\Phi_{\bar a}\ &=\ \exists z_1z_2 \bigl[ S^{inc}_{n}(z_1,z_2) \land S^{eq}_{conn}(z_1,z_2) \bigr]
  %&\exists z_1z_2 \bigl[ S^{inc}_{n}(z_1,z_2) \land S^{eq}_{conn}(z_1,z_2) \bigr] \\
  \exists z_1z_2 \bigl[ \neg E(z_1,z_2) \land
                        ( E^*(\constant{s},z_1) \land E^*(z_2,\constant{t}) \Rightarrow E^*(\constant{s},\constant{t}) ) \bigr] \,.
\end{alignat*}
That is, a sufficient condition for $\Link(z_1,z_2)$ to instantiate
$\bar a$, and thus increase $n$ and leave $conn$ intact, is that
there should be no edge between $z_1$ and $z_2$, and there should
be a path $\constant{s} \leadsto \constant{t}$ if there are
paths $\constant{s}\leadsto z_1$ and $z_2\leadsto\constant{t}$.
The first condition entails that the number of edges in the
graph indeed increases after applying $\Link(z_1,z_2)$, while
the second entails that the truth value of $conn$ does not
change with the application of the action: either it was true
and remains true, or it was false and remains false.

On other hand, the synthesis of the necessary condition
yields $\neg E(z_1,z_2)$ which is quite weak. The reason is
that the formula $\B_S(a,p^*)$ in Table~\ref{table:base} is not
strong enough. A better necessary condition is obtained when
the following term is added to the disjunction in $\B_S(a,p^*)$:
\begin{alignat*}{1}
  &\forall uv\bracket{\neg p(u,v)\notin\Post}\ \land \\
  &\qquad\qquad\exists uv(\bracket{p(u,v)\in\Post}\land p^*(x,u)\land p^*(v,y)) \,.
\end{alignat*}
This term says that the action removes no edge from the graph,
and adds one edge $(u,v)$ for existing paths $x\leadsto u$ and
$v\leadsto y$. Clearly, if this condition is met, the graph has
a path $x\leadsto y$ after the application of the action.
The resulting base thus remains valid for any domain $\D$ and,
in some cases, it provides tighter conditions.
Indeed, with this amendment, the necessary condition becomes
equal to the sufficient condition $\Phi_{\bar a}$.

Finally, observe that these conditions are not invariants.
The reason is that the abstraction is \emph{not} sound since
there are configurations (states) in which no edge can be
added to the graph without altering the $\constant{st}$-connectivity.
In such states, the abstract action $\bar a$ is still
applicable, as it does not have any precondition, but no
$\Link(z_1,z_2)$ action instantiates it.

\Omit{
\subsection{Blocksworld}

\raquel{Raquel START: example Blocks step-by-step}
\input{example-blocks}
\raquel{Raquel END: example Blocks step-by-step}

$\Psi_C(x)=\exists y(\on(xy) \land on^*(y\constant{A}))$.

\medskip\noindent
Action $a=\Newtower(z_1z_2)$:
{\footnotesize
\begin{alignat*}{1}
  N^a_{\on(xy)}\ &=\ \on(xy) \land (z_1z_2\neq xy)\ =\ S^a_{\on(xy)} \\
  N^a_{on^*(y\constant{A})}\ &=\ on^*(y\constant{A}) \\
  S^a_{on^*(y\constant{A})}\ &=\ (y=\constant{A}) \lor (on^*(y\constant{A}) \land z_1=z_2) \\
  S^a_{\Psi_C}\ &=\ \exists y\bigl[\on(xy) \land (z_1z_2\neq xy) \land \\
                &\qquad\qquad ((y=\constant{A}) \lor (on^*(y\constant{A}) \land z_1=z_2)\bigr] \\
                &=\ \exists y\bigl[\on(xy) \land (z_1z_2\neq xy) \land (y=\constant{A})\bigr] \land \\
                &\quad\ \ \,\exists y\bigl[\on(xy) \land (z_1z_2\neq xy) \land on^*(y\constant{A}) \land z_1=z_2\bigr] \\
  \neg S^a_{\Psi_C}\ &=\ \forall y[ \on(xy) \land y=\constant{A} \Rightarrow z_1z_2=xy \bigr] \land \\
                     &\quad\ \ \,\forall y\bigl[ \on(xy) \land on^*(y\constant{A}) \land z_1=z_2 \Rightarrow z_1z_2=xy ] \\
  N^a_{\Psi_C}\ &=\ \exists y\bigl[\on(xy) \land (z_1z_2\neq xy) \land on^*(y\constant{A})\bigr] \\
  \neg N^a_{\Psi_C}\ &=\ \forall y\bigl[\on(xy) \land on^*(y\constant{A}) \Rightarrow z_1z_2=xy\bigr]
\end{alignat*}
}

\noindent
Action $a=\Move(z_3z_4z_5)$:
{\scriptsize
\begin{alignat*}{1}
  N^a_{on(xy)}\ &=\ (z_3z_5=xy) \lor (on(xy) \land (z_3z_4\neq xy)) \\
                &=\ S^a_{on(xy)} \\
  N^a_{on^*(y\constant{A})}\ &=\ on^*(y\constant{A}) \lor on^*(yz_3) \lor on^*(z_5\constant{A}) \\
  S^a_{on^*(y\constant{A})}\ &=\ (y=\constant{A}) \lor (on^*(y\constant{A}) \land z_3=z_4) \\
  S^a_{\Psi_C}\ &=\ \exists y\bigl[((z_3z_5=xy) \lor (on(xy) \land (z_3z_4\neq xy)) \land \\
                &\qquad\qquad ((y=\constant{A}) \lor (on^*(y\constant{A}) \land z_3=z_4))\bigr] \\
                &=\ \exists y\bigl[z_3z_5=xy \land y=\constant{A}\bigr] \lor  \\
                &\quad\ \ \,\exists y\bigl[z_3z_5=xy \land on^*(y\constant{A}) \land z_3=z_4\bigr] \lor  \\
                &\quad\ \ \,\exists y\bigl[on(xy) \land (z_3z_4\neq xy) \land y=\constant{A} \bigr] \lor \\
                &\quad\ \ \,\exists y\bigl[on(xy) \land (z_3z_4\neq xy) \land on^*(y\constant{A}) \land z_3=z_4\bigr] \\
                &=\ (z_3z_5=x\constant{A})\ \lor \\
                &\quad\ \ \,(z_3z_4=xx \land on^*(z_5\constant{A}) )\ \lor \\
                &\quad\ \ \,(on(x\constant{A}) \land z_3z_4\neq x\constant{A})\ \lor \\
                &\quad\ \ \,\exists y\bigl[on(xy) \land (z_3z_4\neq xy) \land on^*(y\constant{A}) \land z_3=z_4\bigr] \\
  N^a_{\Psi_C}\ &=\ \exists y\bigl[ ((z_3z_5=xy) \lor (on(xy) \land (z_3z_4\neq xy))) \land \\
                &\qquad\qquad (on^*(y\constant{A}) \lor on^*(yz_3) \lor on^*(z_5\constant{A}) ) \bigr] \\
\end{alignat*}
}
}

\section{Discussion and Future Work}

%We have presented a general framework for the synthesis of
%guarantees for the generalization of abstractions on given
%instances.
Abstractions for generalized planning can be inductively obtained
from small samples of transitions from one or several instances of planning
problems. Although there are no guarantees on the soundness of the
abstractions, we have shown that the abstractions contain usable information
about the intended planning instances. Indeed, by analyzing the abstraction
with respect to the planning domain, we have shown how to obtain formulas
that capture some of the assumptions being made by the abstraction.
These assumptions can be either sufficient or necessary.
%for the abstraction to be correct on a given instance.
A sufficient condition is a guarantee
for generalization. Necessary conditions may be used
to show that an instance is not captured by the abstraction.

Sufficient conditions may also be used to improve the search for
instantiations of an abstract action $\bar a$ that is applicable in
a state $s$ in instance $P$. In the worst case, one needs to
iterate over every grounded action $a(\bar o)$ to find an instantiation
of $\bar a$. However, if $P$ satisfies the sufficient condition,
it is enough to find a tuple of objects $\bar o$ such that
$s\vDash\Phi_{\bar a}(\bar o)$, something that may be
easier to find than to iterate over the set of grounded actions.

The information provided by invariants has been exploited in
planning for different purposes and it is essential in some planning
paradigms.
%: partial order planning~\cite{weld:ucpop}, planning
%graphs~\cite{Blum1995}, regression search~\cite{Bonet01,invariants-vidal-alvaro15},
%planning as satisfability~\cite{Huang-sas-2012}, and task
%reformulation~\cite{concise-rep-malte-AI2009}.
The automatic synthesis of invariants is a computationally hard problem, %~\cite{Blum1995},
and many of the existing techniques are based on a generate and test approach
that often yields an incomplete set of invariants
\cite{TIM_Fox_JAIR98,Gerevini1998InferringSC,Rintanen00,Rintanen08,concise-rep-malte-AI2009}.
We have shown how to use the synthesis of guarantees for abstractions
as a method for generating candidates for invariants.
Hence, the computation of abstractions for generalized planning may
be relevant even when the focus is to learn invariants rather than
to solve generalized planning problems.
As seen in the examples, some of these invariants are quite complex
and out-of-reach for state-of-the-art methods for invariant synthesis.

There are clear directions for future work.
First, come up with better bases for translation, based on
either $p^+$ or $p^*$, and understand better the strengths
and weaknesses of different bases.
Second, even though we are able to handle concepts that are
more general than those generated from concept grammars,
%(since we handle the full expressivity of first-order logic),
we are not yet able to accommodate distance features as defined
by \citeay{bonet:aaai2019}.
Finally, it may be the time to bring theorem provers into
the pipeline of generalized planning: from samples we obtain
abstractions $\bar Q$ by an inductive process, a solution
$\bar\pi$ for $\bar Q$ is computed with a FOND planner,
and an instance $P$ is then assured to be solved by $\bar\pi$
if $P$ satisfies the guarantee for $\bar Q$.
The latter may be automated with the help of theorem provers.

\section*{Acknowledgments}

B.\ Bonet is supported by a Banco Santander -- UC3M Chair of Excellence Award.
This work is partially funded by grants TIN2017-88476-C2-2-R and RTC-2017-6753-4
of the Spanish Ministerio de Econom\'{\i}a, Industria y Competitividad.

\bibliographystyle{named}
\bibliography{control}

%\end{document}

\section*{Appendix: Guarantees for Blocksworld}

Derivation of sufficient guarantees for the soundness
of the abstraction $\bar Q_{clear}$ for Blocksworld.
The abstraction has a single numerical feature $n$ that
counts the number of blocks above $\constant{A}$:
$n=|\{x : \Psi(x) \}|$ where $\Psi(x)=\exists y(\on(x,y) \land \on^*(y,\constant{A}))\equiv\on^+(x,\constant{A})$.
There is also a unique abstract action that decrements
$n$ and has precondition $n>0$; i.e., $\bar a=\abst{n>0}{\dec{n}}$.

Blocksworld has been formulated in different ways.
We use a formulation with predicates $\on(x,y)$,
$\clear(x)$, and $\ontable(x)$, and with two action
schemas given by:
\begin{enumerate}[--]
  \item $\Newtower(z_1z_2)$: $\Pre=\{\on(z_1z_2),\clear(z_1)\}$, $\Eff=\{\neg\on(z_1z_2),\ontable(z_1),\clear(z_2)\}$, and
  \item $\Move(z_3z_4z_5)$: $\Pre=\{\on(z_3z_4),\clear(z_3),\clear(z_5)\}$,
    $\Eff=\{\neg\on(z_3z_4),\neg\clear(z_5),\on(z_3z_5),\clear(z_4)\}$.
\end{enumerate}
The sufficient condition for the decrement of the feature $n$ is
\begin{alignat*}{1}
  S^{dec}(\bar z) = \forall x(N^a_{\Psi}(\bar zx) \Rightarrow \Psi(x)) \land \exists x(\Psi(x) \land \neg N^a_{\Psi}(\bar zx))
\end{alignat*}
where $N^a_\Psi(\bar zx)$ is a necessary condition for
$\Psi(x)$ to hold after applying the action $a(\bar z)$
in the current state. In the following we derive the
formulas $\Psi^{a_i}_{\bar a}(\bar z)$ for each of the
two action schemas $a_i$ in the domain.
The guarantee for the soundness of the abstraction is
then $\{\Psi_{\bar a} \}$ where
$\Phi_{\bar a}=\exists\bar z(\bigvee_i \Psi^{a_i}_{\bar a}(\bar z))$.
By definition,
$\Psi^{a_i}_{\bar a}(\bar z)=\Pre(a(\bar z)) \land \Pre(\bar a)_C \land S^{dec}(\bar z)$
where $\Pre_C(\bar a)=\exists x(\on^+(x\constant{A}))$.

\bigskip\noindent\textbf{Action} $a_1 = \Newtower(z_1z_2)$: after simplification,
\begin{alignat*}{1}
  N^{a_1}_\Psi(\bar zx)\ &\equiv\ \exists y[\on(xy) \land on^*(y\constant{A}) \land xy\neq z_1z_2] \,, \\
  \neg N^{a_1}_\Psi(\bar zx)\ &\equiv\ \forall y[\on(xy) \land on^*(y\constant{A}) \Rightarrow xy=z_1z_2] \,.
\end{alignat*}
Then,
\begin{alignat*}{1}
  \forall x(N^{a_1}_\Psi \Rightarrow \Psi)\
    &\equiv\ \forall x[N^{a_1}_\Psi \Rightarrow \on^+(x\constant{A})]\ \equiv\ \top \,, \\
  \exists x(\Psi \land \neg N^{a_1}_\Psi)\
    &\equiv\ \on(z_1z_2) \land \on^*(z_2\constant{A}) \ \land \\
    &\quad\ \ \ \forall y [\on(z_1y) \land \on^*(y\constant{A}) \Rightarrow y=z_2] \,.
\end{alignat*}
Hence, $\Psi^{a_1}_{\bar a}(z_1z_2) \equiv \clear(z_1) \land \exists x ( \Psi(x) \land \neg N^{a_1}_\Psi(z_1z_2x))$.

\bigskip\noindent\textbf{Action} $a_2 = \Move(z_3z_4z_5)$: after simplification,
\begin{alignat*}{1}
  N^{a_2}_\Psi
    &\equiv [\on^*(xz_3) \land \on^*(z_5\constant{A})] \lor [\on^+(x\constant{A}) \land x\neq z_3]\ \lor \\
    &\quad \ \ \exists y[\on(xy) \land \on^*(y\constant{A}) \land y\neq z_4] \,, \\
  \neg N^{a_2}_\Psi
    &\equiv [\on^*(xz_3) \Rightarrow \neg\on^*(z_5\constant{A})]{\land}[\on^+(x\constant{A}) \Rightarrow x=z_3]{\land} \\
    &\quad \ \ \forall y[\on(xy) \land \on^*(y\constant{A}) \Rightarrow y=z_4] \,.
\end{alignat*}
Then,
\begin{alignat*}{1}
  \forall x(N^{a_2}_\Psi \Rightarrow \Psi)\
    &\equiv\ \forall x[ \on^*(xz_3) \land \on^*(z_5\constant{A}) \Rightarrow \on^+(x\constant{A}) ] \,, \\
  \exists x(\Psi \land \neg N^{a_2}_\Psi)\
    &\equiv\ \on^+(z_3\constant{A}) \land \neg\on^*(x_5\constant{A})\ \land \\
    &\quad\ \ \ \forall y[ \on(z_3y) \land \on^*(y\constant{A}) \Rightarrow y=z_4 ] \,.
\end{alignat*}
Hence, for $\bar z=z_3z_4z_5$, $\Psi^{a_2}_{\bar a}(\bar z)$ is logically equivalent to
\begin{alignat*}{1}
  \clear(z_3) \land \on(z_3z_4) \land \clear(z_5) \land \exists x ( \Psi(x){\land}\neg N^{a_2}_\Psi(\bar zx) ) .
\end{alignat*}

%\section*{Appendix: Guarantees for Graph Connectivity}

\end{document}